\documentclass{article}

\usepackage{colm2024_conference}
\usepackage{times}
\usepackage{graphicx}
\usepackage{amsmath,amsfonts,amssymb}
\usepackage{amsthm}
\usepackage{algorithm}
\usepackage{algorithmic}
\usepackage{booktabs}
\usepackage{multirow}
\usepackage{hyperref}
\usepackage{natbib}
\usepackage{pifont}
\usepackage[table]{xcolor}
\usepackage{wrapfig}

\colmfinalcopy

\newcommand{\Qtot}{Q_{\text{tot}}}
\newcommand{\Qi}{Q_{i}}
\newcommand{\Ebar}{\bar{R}}

\usepackage{tcolorbox}
\definecolor{darkgray}{rgb}{0.5,0.5,0.5}
\definecolor{lightpurple}{rgb}{0.9, 0.9, 1.0}
\definecolor{ForestGreen}{HTML}{228B22}
\definecolor{revblue}{rgb}{0.0, 0.0, 0.8}

\newtheorem{definition}{Definition}[section]

\newtheorem{theorem}{Theorem}[section]
\newtheorem{corollary}{Corollary}[section]
\newtheorem{assumption}{Assumption}[section]

\title{Agent Q-Mix: Selecting the Right Action for LLM Multi-Agent Systems through Reinforcement Learning}

\author{
 \textbf{Eric Hanchen Jiang\textsuperscript{1*}},
  \textbf{Levina Li\textsuperscript{1*}},
  \textbf{Rui Sun\textsuperscript{1}},
  \textbf{Xiao Liang\textsuperscript{1}},
  \textbf{Yubei Li\textsuperscript{1}},
  \\
  \textbf{ Yuchen Wu\textsuperscript{2}},
  \textbf{Haozheng Luo\textsuperscript{3}},
  \textbf{Hengli Li},
  \textbf{Zhi Zhang\textsuperscript{1}},
  \textbf{Zhaolu Kang},
    \\
  \textbf{ Kai-Wei Chang\textsuperscript{1}},
  \textbf{Ying Nian Wu\textsuperscript{1}}
\\
\\
  \textsuperscript{1}University of California Los Angeles,
  \textsuperscript{2}University of Washington, \\
  \textsuperscript{3}Northwestern University
  }

\begin{document}
\maketitle

\begingroup
\renewcommand\thefootnote{}
\footnotetext{
\textsuperscript{*}Equal contribution. }
\endgroup

\begin{abstract}
Large Language Models (LLMs) have shown remarkable performance in completing various tasks. However, solving complex problems often requires the coordination of multiple agents, raising a fundamental question: how to effectively select and interconnect these agents. In this paper, we propose \textbf{Agent Q-Mix}, a reinforcement learning framework that reformulates topology selection as a cooperative Multi-Agent Reinforcement Learning (MARL) problem. Our method learns decentralized communication decisions using QMIX value factorization, where each agent selects from a set of communication actions that jointly induce a round-wise communication graph. At its core, Agent Q-Mix combines a topology-aware GNN encoder, GRU memory, and per-agent Q-heads under a Centralized Training with Decentralized Execution (CTDE) paradigm. The framework optimizes a reward function that balances task accuracy with token cost. Across seven core benchmarks in coding, reasoning, and mathematics, Agent Q-Mix achieves the highest average accuracy compared to existing methods while demonstrating superior token efficiency and robustness against agent failure. Notably, on the challenging Humanity's Last Exam (HLE) using Gemini-3.1-Flash-Lite as a backbone, Agent Q-Mix achieves 20.8\% accuracy, outperforming Microsoft Agent Framework (19.2\%) and LangGraph (19.2\%), followed by AutoGen and Lobster by OpenClaw. These results underscore the effectiveness of learned, decentralized topology optimization in pushing the boundaries of multi-agent reasoning. Code is available at \url{https://github.com/ericjiang18/Agent-Q-Mix}.
\end{abstract}

\section{Introduction}
\label{sec:intro}

Large language model (LLM)-based multi-agent systems (MAS) have become an increasingly important paradigm for solving tasks that exceed the capability of a single model.
Recent work has shown strong gains in mathematical reasoning~\citep{lei2024macm}, code generation~\citep{islam2024mapcoder}, software development~\citep{he2025llmmultiagent, hong2023metagpt}, and scientific discovery~\citep{ghareeb2025robin}.
At the same time, industrial frameworks such as Microsoft Agent Framework~\citep{microsoft2025agentframework} and LangGraph~\citep{langgraph2024} have accelerated the practical deployment of agentic workflows.
A growing line of research~\citep{zhuge2024gptswarm, zhang2024gdesigner, li2025assemblecrew, zhou2025multiagent} suggests that the \emph{communication topology}, the graph specifying which agents exchange information, is a key determinant of overall system behavior.
In particular, topology affects at least three important properties: \textbf{task performance}, \textbf{token efficiency}, and \textbf{robustness}.
Designing topologies that balance all three remains challenging.

Existing approaches can be broadly divided into two categories.
\textbf{Static methods}~\citep{wu2023autogen, du2023debate, hong2023metagpt, langgraph2024} use fixed communication structures such as chains, stars, or fully connected graphs.
While simple, these patterns do not adapt to task difficulty: easy problems may waste tokens through unnecessary communication, while difficult problems may suffer from insufficient collaboration.
\textbf{Adaptive methods} attempt to learn task-specific topologies, but current approaches still face two important limitations.
First, methods such as G-Designer~\citep{zhang2024gdesigner}, TopoDIM~\citep{sun2025topodim}, and GTD~\citep{jiang2025gtd} rely on a \emph{centralized topology generator} that produces the entire graph at once.
As a result, individual agents do not make independent communication decisions during execution.
This creates a centralized bottleneck and limits flexibility when agents must adapt their communication behavior online.
Second, existing methods generally lack an explicit factorization of the joint topology objective into per-agent contributions, making decentralized action selection difficult to justify.

To address these issues, we propose \textbf{Agent Q-Mix} (Figure~\ref{fig:main_fig}), which reformulates communication topology learning as a cooperative multi-agent reinforcement learning (MARL) problem and solves it using QMIX~\citep{rashid2018qmix}.
Our key idea is to model each agent's communication choice, for example, whether to broadcast, query another agent, debate, verify with a tool, or work independently, as a local discrete action.
The full communication graph is then constructed from the combination of all agents' actions.
This view makes topology learning naturally compatible with \emph{centralized training with decentralized execution} (CTDE).
To implement this, each agent encodes the current communication graph through a topology-aware graph neural network (GNN), maintains temporal memory via a gated recurrent unit (GRU) over communication rounds, and computes per-action Q-values through a multi-layer perceptron (MLP). These individual Q-values are combined by a QMIX monotonic mixing network that ensures consistent decentralized execution.

QMIX is a natural fit for this setting for three reasons.
First, its monotonic value factorization enforces the Individual-Global-Max (IGM) property~\citep{rashid2018qmix}, allowing decentralized greedy action selection with respect to the learned joint value.
Second, the state-conditioned mixing network can model nonlinear interactions between agents while still decomposing the decision process into per-agent utilities.
Third, by combining QMIX with a topology-aware GNN encoder, each agent can condition its decision on the current communication graph and adapt its behavior across multiple communication rounds.

\paragraph{Contributions.}
Our main contributions are as follows:

\begin{itemize}
    \item[\ding{182}] \textbf{Formulation.} We formulate communication-topology learning for LLM-based multi-agent systems as a Networked Multi-Agent Markov Decision Process (Networked MMDP; Section~\ref{sec:mmdp}), and introduce a discrete and interpretable communication action space with six action types that directly induce graph structure.
    \item[\ding{183}] \textbf{Algorithm.} We design a topology-aware agent Q-network based on a GNN-GRU-MLP architecture, together with a QMIX monotonic mixing network that supports CTDE for decentralized communication-action selection.
    \item[\ding{184}] \textbf{Evaluation.} We evaluate Agent Q-Mix on seven core benchmarks spanning mathematics, coding, and reasoning with two base LLMs (GPT-OSS:120B and Gemini-3.1-Flash-Lite), with an additional evaluation on Humanity's Last Exam (HLE), and show strong performance together with improved token efficiency and robustness.
\end{itemize}

\section{Related Work}
\label{sec:related}

\subsection{LLM-Based Multi-Agent Systems}

LLM-based multi-agent systems have developed rapidly in recent years. Early work focused on role-playing and simulation settings~\citep{li2023camel, park2023generative}, while more recent systems have been applied to software engineering~\citep{hong2023metagpt, qian2024chatdev}, scientific reasoning~\citep{chen2023agentverse, ghareeb2025robin}, and general problem solving~\citep{qiu2025alita, wu2023autogen}. Alongside these developments, researchers have explored different ways for agents to interact. For example, LLM-Debate~\citep{du2023debate} uses structured argumentation to improve factual accuracy, and DyLAN~\citep{liu2024dylan} dynamically selects which agents should participate in a given task. Despite these advances, most existing systems still rely on predefined or heuristic communication patterns. In many cases, the interaction structure is fixed in advance or controlled at a global level, rather than adapting to the needs of a specific task. This suggests that communication is not just a supporting component, but a key factor that limits overall system performance.

\subsection{Communication Topology Optimization}

To address the limitations of fixed interaction patterns, recent work has started to explore adaptive communication topologies. Early approaches typically adopt simple structures such as chains~\citep{qian2024chatdev}, stars~\citep{wu2023autogen}, or fully connected graphs~\citep{du2023debate}. While these designs are easy to implement, they lack flexibility across tasks of varying difficulty. More recent methods attempt to learn communication structures automatically. G-Designer~\citep{zhang2024gdesigner} generates graphs using GNN-based models, AgentDropout~\citep{wang2025agentdropout} reduces unnecessary interactions through pruning, and GPTSwarm~\citep{zhuge2024gptswarm} optimizes agent connectivity using gradient-based methods. TopoDIM~\citep{sun2025topodim} introduces diverse interaction modes through a one-shot topology generator, while GTD~\citep{jiang2025gtd} leverages diffusion-based graph generation. MaAS~\citep{zhang2025maas} further formulates the problem as architecture search over agent systems. Although these approaches improve flexibility, they generally rely on centralized mechanisms that generate the communication structure at the system level. As a result, individual agents do not explicitly decide how to communicate during execution. Communication remains a global design choice, rather than something agents can adapt locally based on their own observations and experiences.

\subsection{Multi-Agent Corporative Reinforcement Learning}

Multi-agent reinforcement learning (MARL) provides a natural framework for modeling decentralized decision-making. The paradigm of centralized training with decentralized execution (CTDE)~\citep{oliehoek2016decpomdp} allows agents to use global information during training while relying only on local observations at test time. A large body of work focuses on value decomposition methods. VDN~\citep{sunehag2018vdn} assumes additive decomposition of the joint value function, QMIX~\citep{rashid2018qmix} introduces a monotonic mixing constraint to enable consistent decentralized optimization, and QPLEX~\citep{wang2021qplex} further increases expressiveness through more flexible decomposition. Prior work in MARL has also explored learned communication mechanisms, including differentiable message passing and attention-based communication~\citep{sukhbaatar2016commnet, foerster2016learning}. However, these approaches mainly focus on how information is transmitted, rather than how the communication structure itself is formed. In this work, we take a different perspective by treating communication topology as a decentralized decision problem. Each agent selects its own communication action, and the overall interaction graph emerges from the combination of these local decisions. This formulation connects topology learning with value decomposition methods and allows communication behavior to adapt dynamically during execution.

\begin{figure}[h]
    \centering
    
    \includegraphics[width=\linewidth]{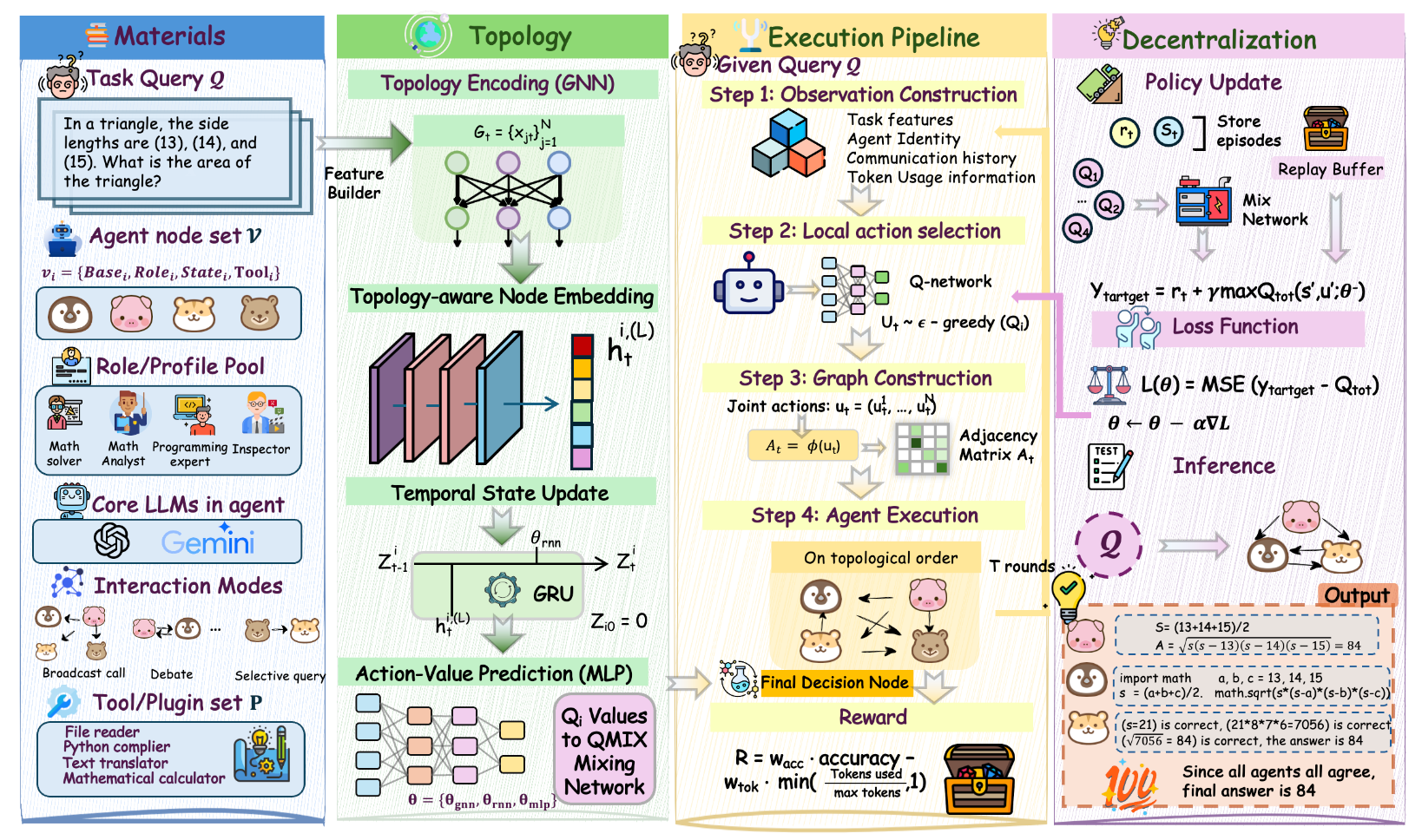}
    \caption{Overview of Agent Q-Mix. Agents select communication actions, induce a round-wise communication graph, exchange messages over that graph, and produce a final team output under centralized training with decentralized execution.}
    \label{fig:main_fig}
   
\end{figure}

\section{Preliminaries}
\label{sec:prelim}

\subsection{Networked Multi-Agent Decision Process}
\label{sec:mmdp}

We consider a cooperative system of $N$ agents interacting through a dynamic communication graph. We model this setting as a \emph{Networked Multi-Agent Markov Decision Process} (Networked MMDP), defined by the tuple
\[
\mathcal{M} = \bigl(\mathcal{N}, \mathcal{S}, \{\mathcal{A}_i\}_{i=1}^N, P, R, \gamma, \mathcal{G}\bigr),
\]
where $\mathcal{N}=\{1,\dots,N\}$ is the set of agents, $\mathcal{S}$ is the global state space, $\mathcal{A}_i$ is the action space of agent $i$, $P : \mathcal{S} \times \prod_{i=1}^N \mathcal{A}_i \to \Delta(\mathcal{S})$ is the transition kernel ($\Delta(\mathcal{S})$ denotes the set of probability distributions over $\mathcal{S}$), $R : \mathcal{S} \times \prod_{i=1}^N \mathcal{A}_i \to \mathbb{R}$ is the team reward, $\gamma \in [0,1)$ is the discount factor, and $\mathcal{G}_t = (\mathcal{N}, \mathcal{E}_t)$ denotes the communication graph at time $t$.

At time step $t$, agent $i$ acts based on its local action-observation history:
\[
\tau_t^i = (x_0^i, u_0^i, \ldots, x_t^i),
\]
where $x_t^i$ is the current observation and $u_t^i$ is the action selected at time step $t$. Let $\mathbf{u}_t = (u_t^1,\dots,u_t^N)$ denote the joint action. The objective is to maximize the expected discounted team return:
\begin{equation}
    J(\boldsymbol{\pi})
    =
    \mathbb{E}_{\boldsymbol{\pi}}
    \left[
        \sum_{t=0}^{\infty} \gamma^t \bar{R}_t
    \right],
    \qquad
    \bar{R}_t
    =
    \frac{1}{N}\sum_{i=1}^N R_i(s_t,\mathbf{u}_t),
    \label{eq:objective}
\end{equation}
where $\boldsymbol{\pi} = (\pi_1,\dots,\pi_N)$ is the joint policy.

\subsection{Graph Based Message Passing}
\label{sec:gnn_prelim}

To encode the current communication graph, we use a message-passing graph neural network (GNN)~\citep{kipf2017gcn}.
Given a graph $\mathcal{G} = (\mathcal{V}, \mathcal{E})$ with node features $\{x_v\}_{v \in \mathcal{V}}$, an $L$-layer GNN iteratively updates node representations through neighborhood aggregation.
For node $v$ at layer $l$,
\begin{equation}
    h_v^{(l)}
    =
    \mathrm{UPDATE}^{(l)}
    \Bigl(
        h_v^{(l-1)},
        \;
        \mathrm{AGG}^{(l)}
        \bigl(
            \{
            \mathrm{MSG}^{(l)}(h_u^{(l-1)})
            \mid
            u \in \mathcal{N}(v)
            \}
        \bigr)
    \Bigr),
    \label{eq:gnn}
\end{equation}
where $h_v^{(0)} = \mathrm{Linear}(x_v)$ initializes the node embedding.
Here, $\mathrm{MSG}^{(l)}$ transforms neighbor features, $\mathrm{AGG}^{(l)}$ aggregates messages, and $\mathrm{UPDATE}^{(l)}$ combines the aggregated neighborhood information with the current node state.

In our implementation, the update step is parameterized using a GRU cell~\citep{cho2014gru}, which helps preserve sequential dependencies during topology-aware message passing and avoids vanishing gradients in deep GNN stacks:
\begin{equation}
    h_v^{(l)}
    =
    \mathrm{GRU}^{(l)}
    \Bigl(
        \mathrm{AGG}^{(l)}(\cdot),
        \;
        h_v^{(l-1)}
    \Bigr).
    \label{eq:gnn_update}
\end{equation}
The final embedding $h_v^{(L)}$ captures both local node features and structural context from the communication graph.

\subsection{QMIX Cooperative Markov Decision Process}
\label{sec:qmix_prelim}

Directly optimizing the joint action-value function $\Qtot(\boldsymbol{\tau}, \mathbf{u})$ is difficult because the joint action space grows exponentially with the number of agents. QMIX~\citep{rashid2018qmix} addresses this by factorizing the joint value into individual agent utilities while preserving the \emph{Individual-Global-Max} (IGM) property.

Specifically, if the joint value function satisfies the monotonicity condition
\begin{equation}
    \frac{\partial \Qtot(\boldsymbol{\tau}, \mathbf{u}, s)}
    {\partial \Qi(\tau^i, u^i)}
    \ge 0,
    \qquad \forall i \in \mathcal{N},
    \label{eq:monotonicity}
\end{equation}
then maximizing the joint value is consistent with independently maximizing each individual utility:
\begin{equation}
    \arg\max_{\mathbf{u}} \Qtot(\boldsymbol{\tau}, \mathbf{u}, s)
    =
    \left(
    \arg\max_{u^1} Q_1(\tau^1,u^1),\;
    \ldots,\;
    \arg\max_{u^N} Q_N(\tau^N,u^N)
    \right).
    \label{eq:igm}
\end{equation}
This property is central to CTDE: the mixed joint value is used during training, while at test time each agent acts greedily with respect to its own local Q-function.

QMIX parameterizes the joint value using a mixing network whose weights and biases are generated from the global state $s$ through hypernetworks. Each hypernetwork is a single-hidden-layer MLP (hidden dimension 64) that maps $s$ to the mixing layer parameters. Let $\mathbf{q} = [Q_1,\dots,Q_N]^\top$ be the vector of individual Q-values. With $W_1 = \mathrm{abs}(\mathrm{HyperNet}_1(s))$, $b_1 = \mathrm{HyperNet}_{b_1}(s)$, $W_2 = \mathrm{abs}(\mathrm{HyperNet}_2(s))$, and $b_2 = \mathrm{HyperNet}_{b_2}(s)$, where $\mathrm{abs}(\cdot)$ denotes element-wise absolute value, the mixed value is defined in Equation~\ref{eq:mixing}:
\begin{equation}
    \Qtot(\boldsymbol{\tau}, \mathbf{u}, s)
    = W_2 \, \mathrm{ELU}(W_1 \mathbf{q} + b_1) + b_2.
    \label{eq:mixing}
\end{equation}
The element-wise absolute value ensures that the mixing weights are nonnegative, which enforces the monotonicity constraint in Equation~\ref{eq:monotonicity}.

Training minimizes a temporal-difference (TD) objective~\citep{mnih2015dqn}:
\begin{equation}
    \mathcal{L}(\theta)
    =
    \mathbb{E}_{\mathcal{D}}
    \Big[
        \big(
            y_t^{\mathrm{tot}}
            -
            \Qtot(\boldsymbol{\tau}_t,\mathbf{u}_t,s_t;\theta)
        \big)^2
    \Big],
    \label{eq:td_loss}
\end{equation}
with target
\begin{equation}
    y_t^{\mathrm{tot}}
    =
    \bar{R}_t
    +
    \gamma
    \max_{\mathbf{u}'}
    \Qtot(\boldsymbol{\tau}_{t+1}, \mathbf{u}', s_{t+1}; \theta^-),
    \label{eq:td_target}
\end{equation}
where $\theta^-$ denotes the target-network parameters~\citep{mnih2015dqn}.

\section{Agent Q-Mix}
\label{sec:method}

We now instantiate the Networked MMDP for communication-topology learning in LLM-based multi-agent systems, as illustrated in Figure~\ref{fig:main_fig}.
Our method consists of three components:
\textbf{(i)} a discrete communication action space,
\textbf{(ii)} a topology-aware agent Q-network, and
\textbf{(iii)} a multi-round execution procedure that converts per-agent actions into a communication graph and executes the resulting workflow.

\begin{figure}[h]
    \centering
    
    \includegraphics[width=\linewidth]{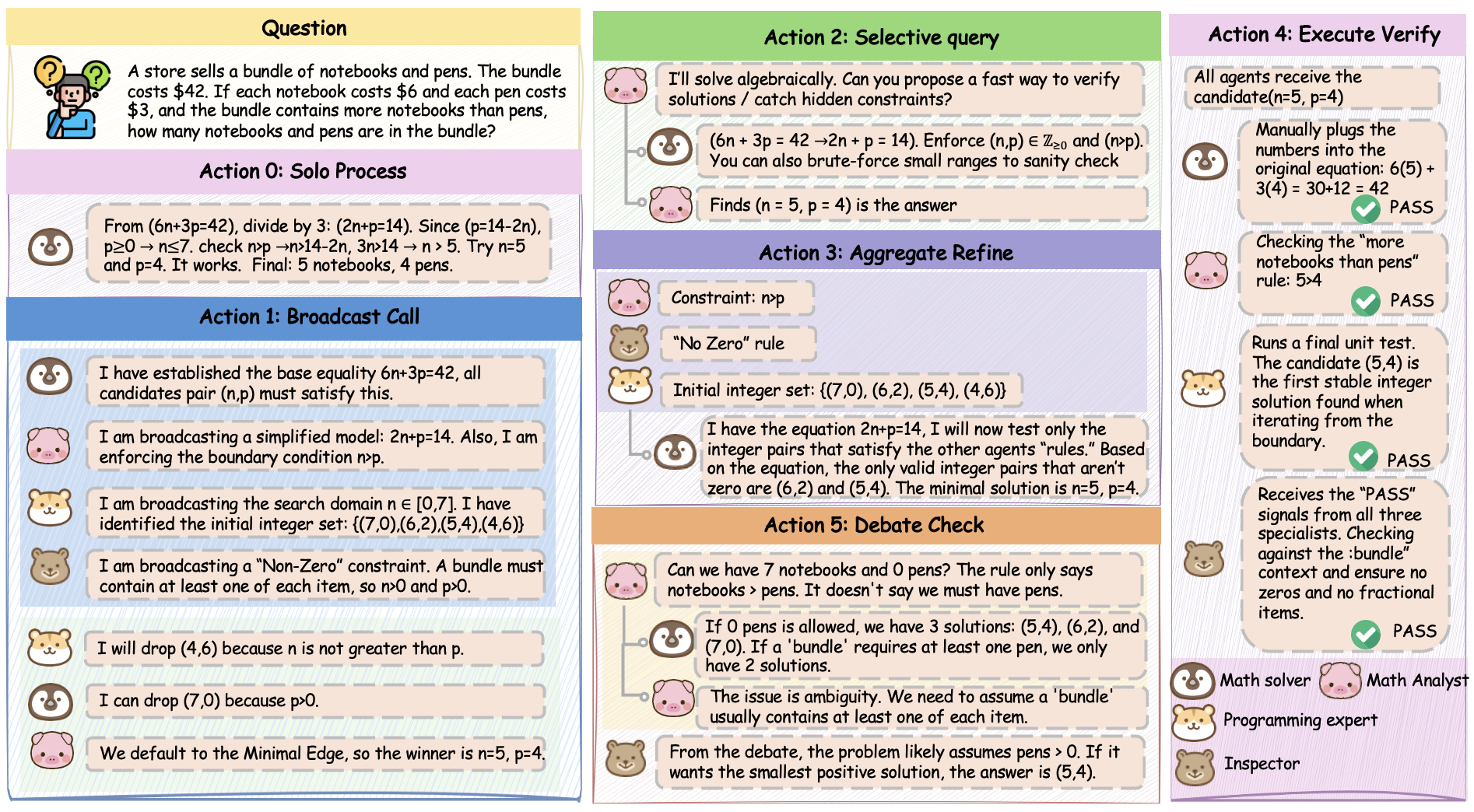}
    \caption{Illustration of the six discrete communication actions in the Agent Q-Mix action space (Section~\ref{sec:action_space}). Each panel depicts the induced graph structure for a representative multi-agent team. Arrows indicate the direction of information flow. From left to right: (a)~\texttt{solo\_process}---the agent works independently with no outgoing edges; (b)~\texttt{broadcast\_all}---the agent sends its message to all other agents; (c)~\texttt{selective\_query}---the agent sends a targeted query to one specific neighbor; (d)~\texttt{aggregate\_refine}---the agent collects responses from all others through incoming edges; (e)~\texttt{execute\_verify}---the agent forwards its output to the next agent for verification with minimal communication overhead; (f)~\texttt{debate\_check}---two agents engage in mutual exchange via bidirectional edges. These six primitives span a range of connectivity levels from isolated nodes to dense cliques, enabling the QMIX policy to compose task-adaptive topologies.}
    \label{fig:action_space}
   
\end{figure}

\subsection{Communication Action Space}
\label{sec:action_space}

Each agent selects from $|\mathcal{A}_i| = 6$ discrete communication actions at each round. Importantly, these actions do not directly dictate message content; rather, each action determines the \emph{structural connectivity} of the agent within the communication graph. The resulting graph topology then governs which agents exchange information during execution. In this sense, Agent Q-Mix learns the communication topology indirectly through the composition of per-agent structural decisions, rather than prescribing a fixed graph.
The joint action vector $\mathbf{u}_t = (u_t^1,\dots,u_t^N)$ is then mapped to an adjacency matrix $A_t$ that defines the communication graph.
The six actions are designed to cover a range of communication patterns observed in effective multi-agent workflows:

Figure~\ref{fig:action_space} summarizes the action space and its induced graph effects. Specifically, \texttt{solo\_process} creates no outgoing edges, \texttt{broadcast\_all} connects an agent to all other agents, \texttt{selective\_query} creates one targeted outgoing edge, \texttt{aggregate\_refine} collects information through incoming edges from all other agents, \texttt{execute\_verify} uses a minimal edge to the next agent, and \texttt{debate\_check} creates mutual edges with one partner.

This action space is both interpretable and expressive: it can recover common fixed topologies (e.g., all agents selecting \texttt{broadcast\_all} yields a fully connected graph; all selecting \texttt{solo\_process} yields no communication) while also permitting heterogeneous, task-adaptive patterns where different agents adopt different communication strategies within the same round.

\paragraph{Motivation and generality.}
We motivate the design of this six-action space from three perspectives.
\textbf{(i)~Grounding in multi-agent primitives.} Each action corresponds to a communication pattern well-studied in prior work: independent work (\texttt{solo\_process})~\citep{wu2023autogen}, round-robin debate (\texttt{debate\_check})~\citep{du2023debate}, broadcast-and-aggregate (\texttt{broadcast\_all}, \texttt{aggregate\_refine})~\citep{hong2023metagpt}, selective routing (\texttt{selective\_query})~\citep{liu2024dylan}, and tool-assisted verification (\texttt{execute\_verify})~\citep{qian2024chatdev}.
\textbf{(ii)~Graph-theoretic coverage.} The six actions span connectivity levels from isolated nodes (no edges) to full cliques, ensuring that the learnable space can express topologies of varying density.
\textbf{(iii)~Domain agnosticism.} The action space encodes no task-specific knowledge; it provides general structural primitives that the QMIX policy can compose. In Section~\ref{sec:main_results} and Appendix~\ref{app:policy}, we show that the same action set produces qualitatively different policies across coding, reasoning, and mathematics domains, suggesting that the framework extends to new task domains without redesigning the action space.

\subsection{Topology-Aware Agent Q-Network}
\label{sec:agent_network}

Each agent maintains an individual action-value function $\Qi(\tau_t^i,\cdot\,;\theta)$. The network first performs topology encoding: at communication round $t$, each agent has an observation $x_t^i$, and all observations $\{x_t^j\}_{j=1}^N$ are processed jointly by an $L$-layer GNN over the current communication graph $\mathcal{G}_t$ to produce topology-aware node embeddings. For agent $i$, we denote the final embedding by $h_t^{i,(L)}$. To capture multi-round dependencies, each agent maintains a recurrent hidden state that is updated with a GRU:
\begin{equation}
    z_t^i
    =
    \mathrm{GRU}\bigl(z_{t-1}^i,\, h_t^{i,(L)};\theta_{\mathrm{rnn}}\bigr),
    \qquad
    z_0^i = \mathbf{0}.
    \label{eq:temporal_gru}
\end{equation}
The updated temporal state is then passed to an MLP to produce Q-values over the communication action space:
\begin{equation}
    \Qi(\tau_t^i,\cdot\,;\theta)
    =
    \mathrm{MLP}\bigl(z_t^i;\theta_{\mathrm{mlp}}\bigr).
    \label{eq:qhead}
\end{equation}
The parameters
\[
\theta = \{\theta_{\mathrm{gnn}}, \theta_{\mathrm{rnn}}, \theta_{\mathrm{mlp}}\}
\]
are shared across agents, while agent identity is included in the observation features. The individual Q-values are then combined by the QMIX mixing network from Equation~\ref{eq:mixing}. The global state $s_t$ provided to the mixing network concatenates all agent observations together with graph-level statistics such as edge count, graph density, and the number of active agents. Gradients from the TD loss in Equation~\ref{eq:td_loss} backpropagate through the mixing network into all agent-side modules. We provide a formal convergence analysis of this training procedure in Appendix~\ref{app:convergence}.

\subsection{Multi-Round Execution Pipeline}
\label{sec:pipeline}

Given a task query $q$, Agent Q-Mix operates for $T$ communication rounds. At each round $t$, each agent forms an observation $x_t^i$ containing task features, agent identity, communication history, and token-usage information. The topology-aware agent Q-network computes $\Qi(\tau_t^i,\cdot)$, and agent $i$ selects an action $u_t^i$; during training, actions are selected using $\epsilon$-greedy exploration, while during deployment, agents act greedily. The joint action vector $\mathbf{u}_t = (u_t^1,\dots,u_t^N)$ is mapped to an adjacency matrix
\begin{equation}
    A_t = \phi(\mathbf{u}_t),
\end{equation}
which defines the communication graph for that round. Agents then execute according to the induced topology in topological order, where each agent conditions on information received from its spatial predecessors in the current round together with temporal context accumulated from previous rounds. After $T$ rounds, a final decision node aggregates the agents' outputs to produce the system answer. The full training-and-execution workflow is summarized in Algorithm~\ref{alg:training}. To balance task quality and token efficiency, we use the reward
\begin{equation}
    R
    =
    w_{\mathrm{acc}} \cdot \mathrm{accuracy}
    -
    w_{\mathrm{tok}} \cdot
    \min\!\left(
        \frac{\mathrm{tokens\_used}}{\mathrm{max\_tokens}},
        1
    \right),
    \label{eq:reward}
\end{equation}
where $\mathrm{max\_tokens} = 10{,}000$ is a fixed normalization budget that caps the token-cost ratio at~1, and $w_{\mathrm{acc}}$, $w_{\mathrm{tok}}$ are domain-specific reward weights (reported in Table~\ref{tab:hyperparams}).

To provide an example of the learned communication behavior, Figure~\ref{fig:case_study} shows a representative trajectory in which agents move from initial independent reasoning to targeted exchange and final aggregation. This behavior is consistent with the adaptive multi-round communication pipeline described above.

\begin{figure}[t]
\centering
\includegraphics[width=\linewidth]{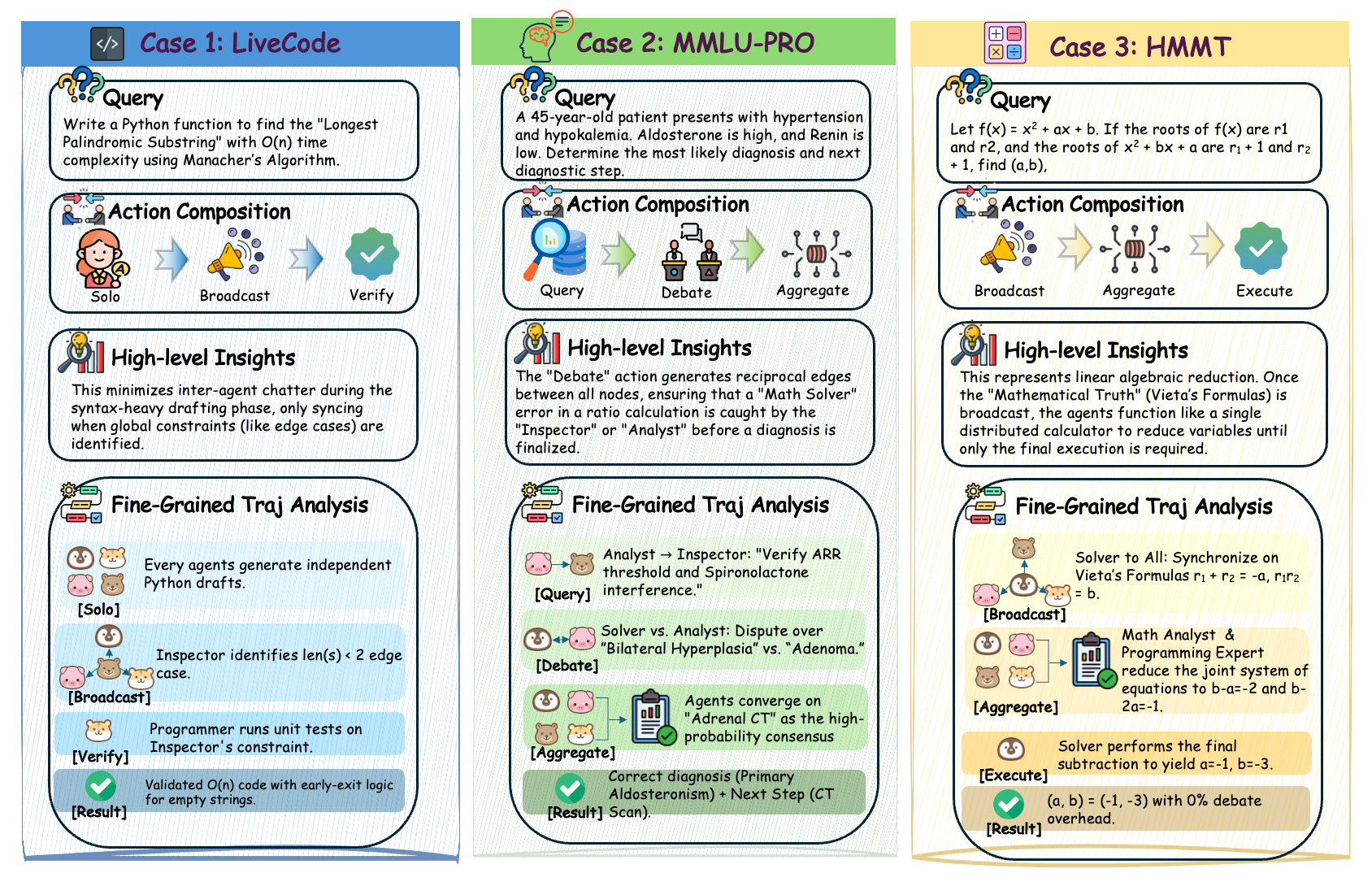}
\caption{Case study of Agent Q-Mix showing a representative communication and reasoning trajectory across rounds. In practice, mathematics tasks typically converge in $T=3$ communication rounds, while coding and reasoning tasks require $T=2$ rounds (see the ablation on communication rounds in Figure~\ref{fig:ablation}d and Appendix~\ref{app:training}).}
\label{fig:case_study}
\end{figure}

\section{Experiments}
\label{sec:experiments}

\subsection{Experimental Setup}
\label{sec:setup}

We evaluate Agent Q-Mix on two base LLMs (GPT-OSS:120B and Gemini-3.1-Flash-Lite) across seven held-out benchmarks spanning coding, reasoning, and mathematics. We compare against single-agent prompting, static multi-agent coordination, adaptive topology optimization methods, and commercial multi-agent frameworks. Figure~\ref{fig:hle} highlights an additional evaluation on Humanity's Last Exam (HLE) under the same setup as MMLU-Pro. To keep the main paper focused on empirical findings, we move detailed benchmark definitions, baseline composition, and implementation configuration to Appendix~\ref{app:exp_setup}, with complementary dataset statistics and hyperparameters reported in Appendix~\ref{app:data} and Appendix~\ref{app:training}. The full prompts used for each agent role are provided in Appendix~\ref{app:prompts}. This protocol isolates the effect of the proposed method in Section~\ref{sec:method}: any performance differences are attributable to learned communication actions and QMIX-based topology adaptation rather than changes in evaluation tasks.

\subsection{Main Results}
\label{sec:main_results}

Table~\ref{tab:main} presents the main accuracy comparison across two base LLMs: GPT-OSS:120B and Gemini-3.1-Flash-Lite.
All results are the median of three independent runs.

\begin{wrapfigure}{r}{0.50\linewidth}
\vspace{-2em}
\centering
\includegraphics[width=\linewidth]{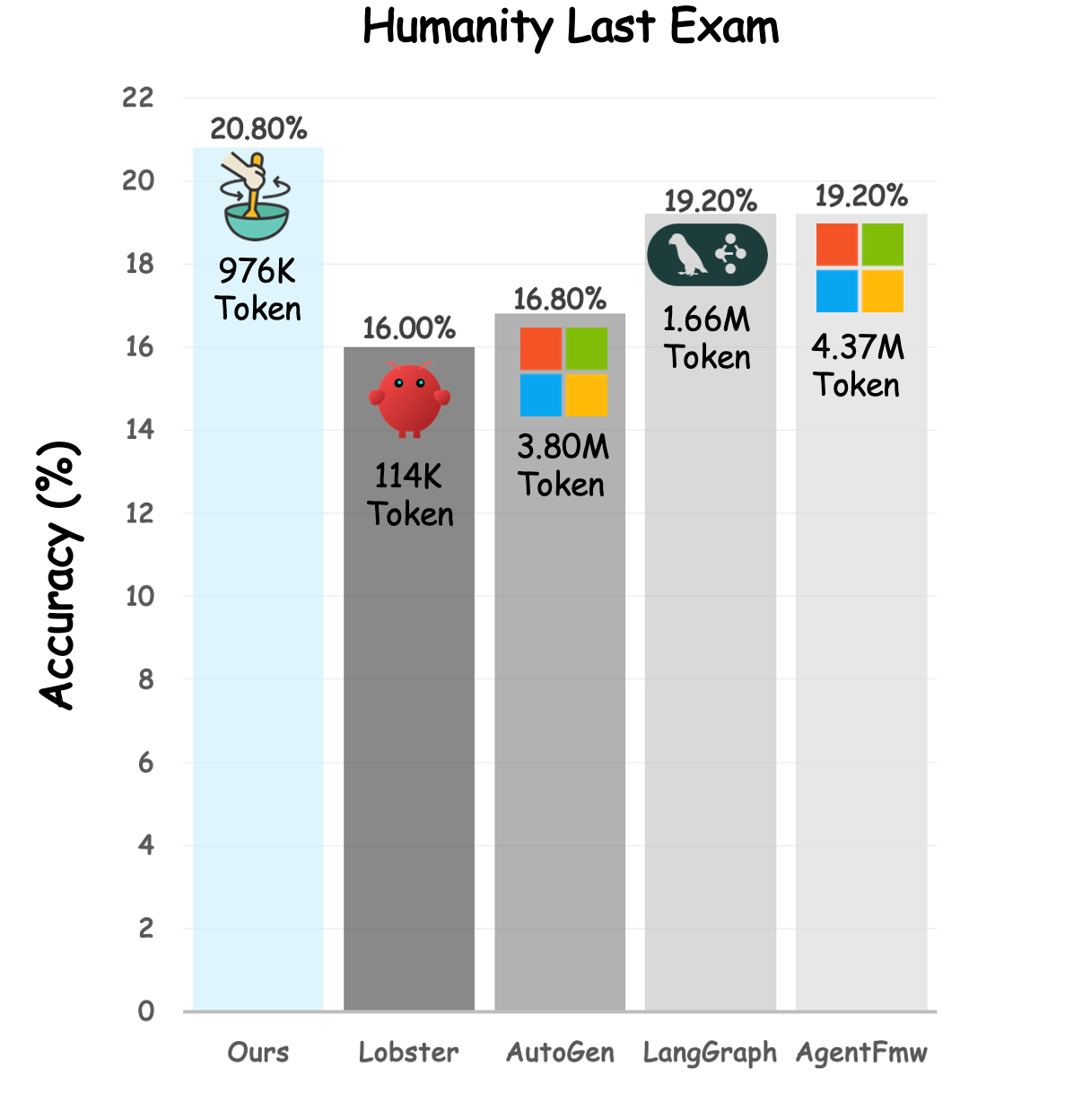}
\caption{\textbf{Humanity's Last Exam (HLE).} Accuracy (\%) on the first 250 MCQ items of HLE using Gemini-3.1-Flash-Lite. Agent Q-Mix (20.8\%) outperforms Microsoft Agent Framework (19.2\%), LangGraph (19.2\%), AutoGen, and Lobster, while using significantly lower tokens.}
\label{fig:hle}
\vspace{-3.0em}
\end{wrapfigure}

\begin{table*}[t]
\centering
\small
\setlength{\tabcolsep}{3.5pt}
\resizebox{\textwidth}{!}{%
\begin{tabular}{l cc c cccc c}
 \rowcolor{gray!10} \toprule
\multirow{2}{*}{\textbf{\textsc{Model/Methods}}}  
& \multicolumn{2}{c}{\textbf{Code}} & \textbf{Reasoning} & \multicolumn{4}{c}{\textbf{Mathematics}} & \\
\cmidrule(lr){2-3} \cmidrule(lr){4-4} \cmidrule(lr){5-8}
\rowcolor{gray!10} & LiveCode & HE & MMLU-Pro & AIME$_{25}$ & AIME$_{26}$ & HMMT & Beyond & \textbf{Avg} \\
\midrule
\rowcolor{gray!8}
\multicolumn{9}{c}{\textbf{Base Model: GPT-OSS:120B}} \\
\midrule
\multicolumn{9}{l}{\textit{Single-agent baseline}} \\
Base (direct)~\citep{agarwal2025gptoss} & 88.75 & 90.04 & 74.29 & 23.33 & 20.00 & 26.67 & 18.00 & 48.73 \\
\midrule
\multicolumn{9}{l}{\textit{Static multi-agent}} \\
LLM-Debate~\citep{du2023debate} & 92.00 & 89.63 & 78.57 & 30.00 & 36.67 & 33.33 & 28.00 & 55.46 \\
\midrule
\multicolumn{9}{l}{\textit{Adaptive topology methods}} \\
GPTSwarm~\citep{zhuge2024gptswarm} & 95.75 & 90.24 & 81.43 & 36.67 & 43.33 & 36.67 & 30.00 & 59.16 \\
AgentDropout~\citep{wang2025agentdropout} & 90.00 & 88.41 & 82.86 & 33.33 & 30.00 & 33.33 & 22.00 & 54.28 \\
G-Designer~\citep{zhang2024gdesigner} & 92.25 & 92.08 & 84.29 & 40.00 & 43.33 & 36.67 & 32.00 & 60.09 \\
MaAS~\citep{zhang2025maas} & 94.50 & 92.07 & 84.29 & 36.67 & 33.33 & 33.33 & 27.00 & 57.31 \\
GTD~\citep{jiang2025gtd} & 93.75 & 91.46 & 85.71 & 33.33 & 40.00 & \underline{43.33} & 35.00 & 60.37 \\
TopoDIM~\citep{sun2025topodim} & 94.00 & 95.73 & \underline{88.57} & 36.67 & 26.67 & 40.00 & 34.00 & 59.38 \\
\midrule
\multicolumn{9}{l}{\textit{Commercial Framework (multi-agent mode)}} \\
Lobster~\citep{openclaw2025lobster} & 96.00 & \underline{96.34} & 87.14 & \underline{63.33} & 60.00 & 6.67 & 15.00 & 60.64 \\
LangGraph~\citep{langgraph2024} & 93.50 & 93.90 & 45.86 & 46.67 & 60.00 & 30.00 & 25.00 & 56.42 \\
AutoGen~\citep{wu2023autogen} & \underline{100.00} & 95.12 & 81.43 & 53.33 & \underline{70.00} & 43.33 & \underline{37.00} & \underline{68.60} \\
Agent Framework~\citep{microsoft2025agentframework} & 98.25 & 95.73 & 80.00 & 53.33 & 63.33 & 30.00 & 32.00 & 64.66 \\
\midrule
\rowcolor{lightpurple} \textbf{Agent Q-Mix} & \textbf{100.00} & \textbf{97.56} & \textbf{92.86} & \textbf{63.33} & 60.00 & \textbf{53.33} & \textbf{42.00} & \textbf{72.73} \\
\midrule
\midrule
\rowcolor{gray!8}
\multicolumn{9}{c}{\textbf{Base Model: Gemini-3.1-Flash-Lite}} \\
\midrule
\multicolumn{9}{l}{\textit{Single-agent baseline}} \\
Base (direct)~\citep{comanici2025gemini} & 96.50 & 92.07 & 81.43 & 30.00 & 13.33 & 33.33 & 22.00 & 52.67 \\
\midrule
\multicolumn{9}{l}{\textit{Static multi-agent}} \\
LLM-Debate~\citep{du2023debate} & 98.00 & 92.68 & 82.86 & 33.33 & 20.00 & 33.33 & 18.00 & 54.03 \\
\midrule
\multicolumn{9}{l}{\textit{Adaptive topology methods}} \\
GPTSwarm~\citep{zhuge2024gptswarm} & 99.00 & 93.70 & 82.86 & 36.67 & 40.00 & 36.47 & 20.00 & 58.39 \\
AgentDropout~\citep{wang2025agentdropout} & 96.25 & 91.46 & 74.29 & 26.67 & 33.33 & 36.67 & 29.00 & 55.38 \\
G-Designer~\citep{zhang2024gdesigner} & 98.75 & 93.90 & 78.57 & 33.33 & 36.67 & 43.33 & 27.00 & 58.79 \\
MaAS~\citep{zhang2025maas} & 100.00 & 92.68 & 85.71 & 36.67 & 40.00 & \underline{46.67} & 27.00 & 61.25 \\
GTD~\citep{jiang2025gtd} & 97.75 & 92.68 & 87.14 & 40.00 & 46.67 & 43.33 & 31.00 & 62.65 \\
TopoDIM~\citep{sun2025topodim} & 98.25 & 93.90 & 85.71 & 40.00 & 43.33 & \underline{46.67} & 32.00 & 62.84 \\
\midrule
\multicolumn{9}{l}{\textit{Commercial Framework (multi-agent mode)}} \\
Lobster~\citep{openclaw2025lobster} & 100.00 & \underline{97.56} & \underline{91.43} & 36.67 & 36.67 & 26.33 & 24.00 & 58.95 \\
LangGraph~\citep{langgraph2024} & 100.00 & 95.12 & \underline{91.43} & \underline{46.67} & 36.67 & 26.33 & 24.00 & 60.03 \\
AutoGen~\citep{wu2023autogen} & 100.00 & 96.95 & 88.57 & 40.00 & \underline{60.00} & 33.33 & 25.00 & 63.41 \\
Agent Framework~\citep{microsoft2025agentframework} & 100.00 & 94.51 & 90.00 & 40.00 & 56.67 & 40.00 & \underline{33.00} & \underline{64.88} \\
\midrule
\rowcolor{lightpurple}  \textbf{Agent Q-Mix} & \textbf{100.00} & 95.73 & 88.57 & 40.00 & \textbf{60.00} & \textbf{50.00} & \textbf{34.00} & \textbf{66.90} \\
\bottomrule
\end{tabular}
}
\caption{Accuracy (\%) on seven benchmarks compared to base (single-agent). \textbf{Bold}: best overall per model. \underline{Underline}: best among prior methods.}
\label{tab:main}
\end{table*}

As shown in Table~\ref{tab:main}, Agent Q-Mix yields the strongest overall performance on GPT-OSS:120B, achieving 72.73\% average accuracy across all seven benchmarks. This improves on the best adaptive-topology baseline (GTD, 60.37\%) by +12.36 points and on the strongest commercial framework baseline (AutoGen, 68.60\%) by +4.13 points. On Gemini-3.1-Flash-Lite, Agent Q-Mix remains highly competitive and achieves the strongest reported mathematics profile (AIME$_{25}$: 40.00, AIME$_{26}$: 60.00, HMMT: 50.00, Beyond-AIME: 34.00), indicating that the same topology-learning mechanism transfers across model families.

The largest gains appear on mathematics benchmarks, where coordinated multi-round deliberation is most important. In particular, Agent Q-Mix reaches 53.33\% on HMMT~2025, improving over the next best prior method (43.33\%) by +10.00 points, and reaches 42.00\% on Beyond-AIME, outperforming GTD (35.00\%) and AutoGen (37.00\%). These improvements are consistent with the method design in Section~\ref{sec:method}: the topology-aware GNN representation and recurrent state allow agents to adapt communication patterns over rounds rather than relying on a fixed interaction template.

Agent Q-Mix also preserves strong performance on coding and reasoning tasks, achieving 100.00\% on LiveCodeBench~(v6), 97.56\% on HumanEval, and 92.86\% on MMLU-Pro with GPT-OSS:120B. This supports the central claim of the paper: QMIX-based decentralized action selection can increase coordination quality without sacrificing performance on domains where single-agent performance is already high. We provide a qualitative analysis with representative examples from each domain in Appendix~\ref{app:qualitative}.

\subsection{Robustness to Adversarial Agents}
\label{sec:robustness}

To evaluate robustness, we replace one of the three agents with an adversarial agent that deliberately provides incorrect answers on MMLU-Pro.
Table~\ref{tab:robustness} reports accuracy before and after the adversarial injection.

\begin{table}[h]
\centering
\includegraphics[width=\linewidth]{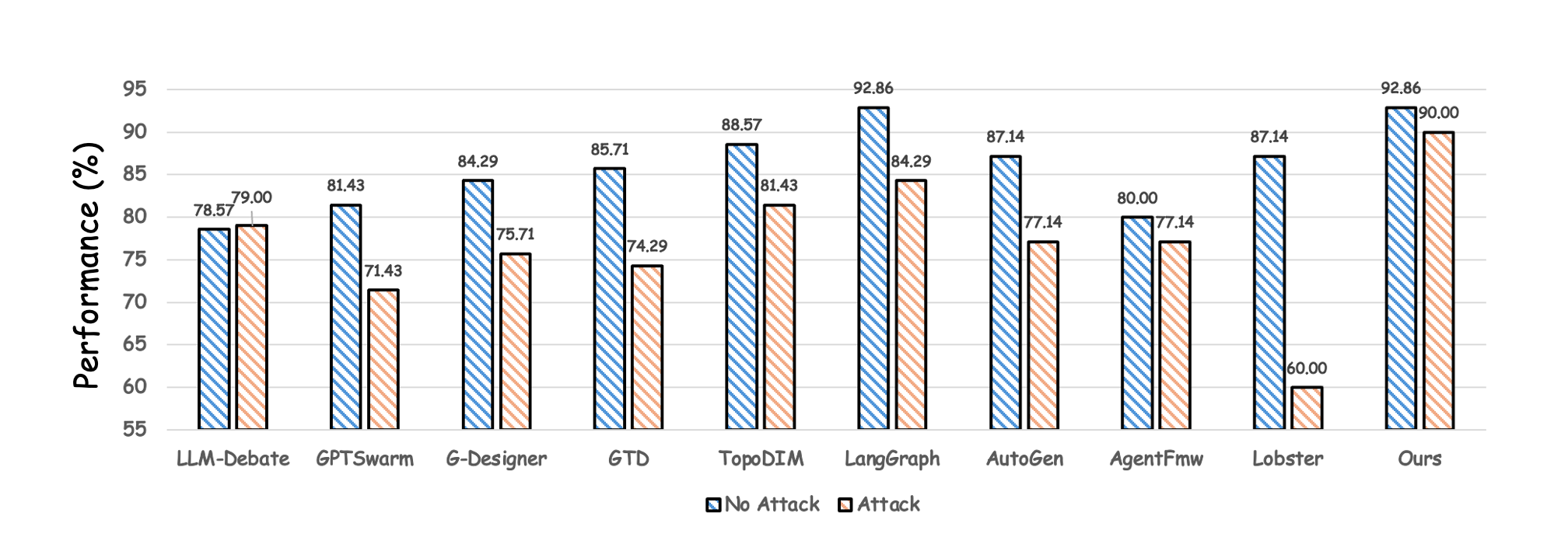}
\caption{Robustness on MMLU-Pro (GPT-OSS:120B). One adversarial agent is injected. $\Delta$: accuracy drop.}
\label{tab:robustness}
\end{table}

Table~\ref{tab:robustness} indicates that Agent Q-Mix is the most robust method under adversarial-agent injection on MMLU-Pro. Accuracy drops by only 2.86 points (92.86\% to 90.00\%), compared with larger degradations for prior multi-agent methods (e.g., 8.57 for LLM-Debate and 10.00 for GPTSwarm and AutoGen). This behavior is consistent with the method’s adaptive-graph mechanism: when one agent becomes unreliable, the learned policy can down-weight that agent’s influence by selecting communication actions that reduce harmful edge propagation.

Overall, the robustness results complement the accuracy and efficiency findings: the same topology-learning objective that improves average performance also increases fault tolerance under distribution shift in agent quality.

\subsection{Token Efficiency}
\label{sec:efficiency}

A key advantage of learned topologies is that agents can communicate less on easy problems and more on hard ones.
Table~\ref{tab:token} reports total token usage on three representative benchmarks.

\begin{table}[t]
\centering
\includegraphics[width=\linewidth]{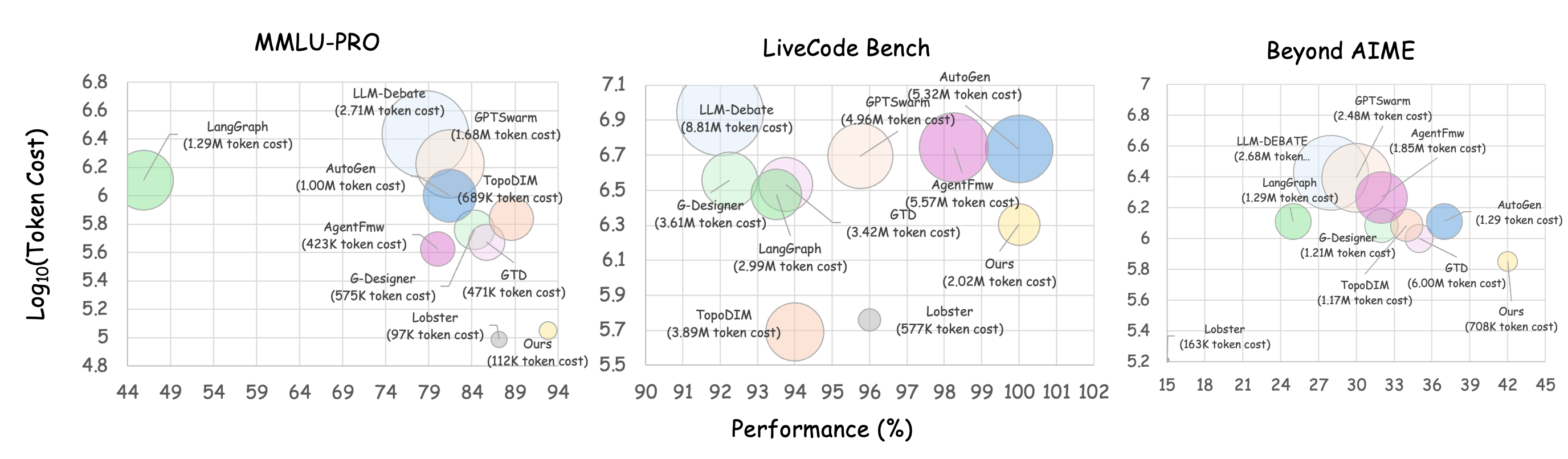}
\caption{Token usage comparison on GPT-OSS:120B (total tokens across all samples). Lower is better at comparable accuracy.}
\label{tab:token}
\end{table}

Table~\ref{tab:token} shows that Agent Q-Mix is substantially more token-efficient than most multi-agent baselines while remaining highly accurate. On MMLU-Pro, Agent Q-Mix uses 112K tokens, which is only 15\% above the single-agent Lobster baseline (97K) and 4--24$\times$ lower than other multi-agent methods (471K--2.71M). On Beyond-AIME, Agent Q-Mix uses 708K tokens versus 1.00M--2.68M for representative adaptive and static baselines, while delivering stronger accuracy in Table~\ref{tab:main}.

These efficiency gains are aligned with the learning objective in Equation~\ref{eq:reward} and the action semantics in Section~\ref{sec:action_space}. Because communication is selected as a learned per-agent action, the policy can allocate dense communication only when needed and default to sparse coordination on easier instances.

\section{Conclusion}
\label{sec:conclusion}

We presented Agent Q-Mix, a framework that casts communication-topology learning in LLM-based multi-agent systems as a cooperative MARL problem and solves it through QMIX-based value decomposition.
By combining topology-aware GNN encoding, temporal GRU memory, and monotonic mixing, the method supports centralized training together with decentralized communication-action selection at deployment time.
Each agent independently chooses from a set of interpretable communication actions, and the resulting joint decisions induce the team communication graph across multiple rounds.

\bibliography{colm2024_conference}
\bibliographystyle{plainnat}

\newpage
\appendix
\section{Experimental Setup and Baselines}
\label{app:exp_setup}

We evaluate Agent Q-Mix with two base LLMs: GPT-OSS:120B~\citep{agarwal2025gptoss} and Gemini-3.1-Flash-Lite. The evaluation suite covers seven core benchmarks across three domains: coding (LiveCodeBench v6~\citep{jain2024livecodebench}, HumanEval~\citep{chen2021codex}), reasoning (MMLU-Pro~\citep{wang2024mmlupro}), and mathematics (AIME$_{2025}$~\citep{aime2025}, AIME$_{2026}$~\citep{aime2025}, Beyond-AIME~\citep{beyondaime2025}, HMMT$_{2025}$~\citep{hmmt2025}). We additionally evaluate on Humanity's Last Exam (HLE), a challenging multi-domain benchmark, using the first 250 MCQ items under the same multi-agent configuration as MMLU-Pro.

We compare against four baseline groups: (i) a single-agent baseline (vanilla direct prompting), (ii) a static multi-agent baseline (LLM-Debate~\citep{du2023debate}), (iii) adaptive topology methods (GPTSwarm~\citep{zhuge2024gptswarm}, AgentDropout~\citep{wang2025agentdropout}, G-Designer~\citep{zhang2024gdesigner}, MaAS~\citep{zhang2025maas}, TopoDIM~\citep{sun2025topodim}, GTD~\citep{jiang2025gtd}), and (iv) commercial multi-agent frameworks.

For implementation, Agent Q-Mix uses three domain-specific teams, each with two domain specialists and one AnalyzeAgent, followed by a FinalRefer decision node. The main network uses a 2-layer GNN (hidden size 128), a GRU (hidden size 128), and a QMIX hypernetwork output dimension of 64. We train with Adam (learning rate $5\times10^{-4}$), discount factor $\gamma=0.99$, replay-buffer size 5000, and batch size 8, with $\epsilon$ annealed from 1.0 to 0.05. We train on 15 examples per domain for 50 episodes and evaluate on held-out benchmarks, using 3 communication rounds for mathematics and 2 rounds for coding and reasoning.

\section{Algorithm Pseudocode}
\label{app:algorithm}

\begin{algorithm}[h]
\caption{Agent Q-Mix Training}
\label{alg:training}
\begin{algorithmic}[1]
\REQUIRE Dataset $\mathcal{D}_{\text{train}}$, episodes $M$, rounds $T$, agents $N$
\ENSURE Trained agent network $\theta$, mixing network $\psi$
\STATE Initialize agent Q-network $Q_\theta$, mixing network $f_\psi$, and target networks $\theta^- \leftarrow \theta$, $\psi^- \leftarrow \psi$
\STATE Initialize replay buffer $\mathcal{B}$ and exploration rate $\epsilon \leftarrow 1.0$
\FOR{episode $= 1, \ldots, M$}
    \STATE Sample task $q \sim \mathcal{D}_{\text{train}}$
    \STATE Initialize GRU hidden states $z_0^i \leftarrow \mathbf{0}$ for all agents $i \in \{1,\ldots,N\}$
    \STATE Initialize adjacency matrix $A_0 \leftarrow I$ \hfill \textit{(identity = no initial communication)}
    \FOR{round $t = 1, \ldots, T$}
        \STATE $\{x_t^i\}_{i=1}^N \leftarrow \textsc{GetObservations}(q, \mathcal{G}_t)$ \hfill \textit{($\mathcal{G}_t = (\mathcal{N}, \mathcal{E}_t)$ is the communication graph at round $t$)}
        \STATE $\{h_t^{i,(L)}\}_{i=1}^N \leftarrow \text{GNN}(\{x_t^j\}, A_t;\theta_{\text{gnn}})$ \hfill \textit{(topology-aware node embeddings, $L$-layer GNN)}
        \FOR{agent $i = 1, \ldots, N$}
            \STATE $z_t^i \leftarrow \text{GRU}(z_{t-1}^i, h_t^{i,(L)};\theta_{\text{rnn}})$ \hfill \textit{(temporal hidden state, Eq.~\ref{eq:temporal_gru})}
            \STATE $Q_i(\tau_t^i,\cdot) \leftarrow \text{MLP}(z_t^i;\theta_{\text{mlp}})$ \hfill \textit{(per-action Q-values, Eq.~\ref{eq:qhead})}
            \STATE $u_t^i \leftarrow \epsilon\text{-greedy}(Q_i)$
        \ENDFOR
        \STATE $A_{t+1} \leftarrow \phi(\mathbf{u}_t)$ \hfill \textit{($\phi$: action-to-adjacency mapping, Section~\ref{sec:action_space})}
        \STATE Construct global state $s_t \leftarrow [\{x_t^i\}_{i=1}^N, |\mathcal{E}_t|, \text{density}(\mathcal{G}_t), N]$
        \STATE Execute agents in topological order induced by $A_{t+1}$
    \ENDFOR
    \STATE Compute final task accuracy and token usage
    \STATE $R \leftarrow w_{\text{acc}} \cdot \text{acc} - w_{\text{tok}} \cdot \min(\text{tokens\_used}/\text{max\_tokens},\, 1)$ \hfill \textit{(Eq.~\ref{eq:reward})}
    \STATE Store episode transition $(\{x_t, u_t, R, A_t, s_t\}_{t=1}^T)$ in $\mathcal{B}$
    \STATE Sample a batch from $\mathcal{B}$ and compute the TD loss (Eq.~\ref{eq:td_loss})
    \STATE Update $\theta$ and $\psi$ by gradient descent
    \STATE Periodically update target networks: $\theta^- \leftarrow \theta$, $\psi^- \leftarrow \psi$
    \STATE Decay $\epsilon$
\ENDFOR
\end{algorithmic}
\end{algorithm}

\section{Data Statistics}
\label{app:data}

Table~\ref{tab:datasets} summarizes all datasets used for training and evaluation.

\begin{table}[h]
\centering
\caption{Dataset statistics.}
\label{tab:datasets}
\small
\begin{tabular}{llccc}
\toprule
\textbf{Dataset} & \textbf{Domain} & \textbf{Split} & \textbf{Samples} & \textbf{Phase} \\
\midrule
LCB TestGen & Coding & test & 15 & Train \\
MMLU-Pro & Reasoning & val & 15 & Train \\
AIME 2024 & Math & train & 15 & Train \\
\midrule
LiveCodeBench & Coding & test & 400 & Eval \\
HumanEval & Coding & test & 164 & Eval \\
MMLU-Pro & Reasoning & val & 70 & Eval \\
AIME 2025 & Math & train & 30 & Eval \\
AIME 2026 & Math & train & 30 & Eval \\
Beyond-AIME & Math & test & 100 & Eval \\
HMMT 2025 & Math & train & 30 & Eval \\
\bottomrule
\end{tabular}
\end{table}

We train on only 15 examples per domain to demonstrate that the QMIX policy generalizes from minimal supervision.
Evaluation benchmarks are held out and span a range of difficulty levels.

\section{Training Details}
\label{app:training}

\paragraph{Hyperparameters.}
Table~\ref{tab:hyperparams} lists the full hyperparameter configuration.

\begin{table}[h]
\centering
\caption{Hyperparameter settings.}
\label{tab:hyperparams}
\small
\begin{tabular}{lc}
\toprule
\textbf{Hyperparameter} & \textbf{Value} \\
\midrule
GNN layers & 2 \\
GNN hidden dimension & 128 \\
GRU hidden dimension & 128 \\
QMIX mixing hidden dim & 64 \\
Hyper-network hidden dim & 64 \\
Optimizer & Adam \\
Learning rate & $5 \times 10^{-4}$ \\
Discount factor ($\gamma$) & 0.99 \\
Replay buffer capacity & 5000 \\
Batch size & 8 \\
Gradient clip norm & 10.0 \\
Target network update interval & 200 steps \\
$\epsilon$ start / end & 1.0 / 0.05 \\
$\epsilon$ annealing & Linear over all episodes \\
\midrule
\multicolumn{2}{l}{\textit{Domain-specific reward weights}} \\
$w_{\mathrm{acc}}$ (GPT-OSS:120B) & 1.25 \\
$w_{\mathrm{tok}}$ (GPT-OSS:120B) & 0.10 \\
$w_{\mathrm{acc}}$ (Gemini-3.1-Flash-Lite) & 1.50 \\
$w_{\mathrm{tok}}$ (Gemini-3.1-Flash-Lite) & 0.075 \\
$\mathrm{max\_tokens}$ (normalization budget) & 10{,}000 \\
\midrule
\multicolumn{2}{l}{\textit{Communication rounds}} \\
Mathematics benchmarks & 3 \\
Coding and reasoning benchmarks & 2 \\
\bottomrule
\end{tabular}
\end{table}

\paragraph{Agent teams.}
Each domain uses a team of three agents plus one FinalRefer decision node. The agent roles are differentiated through domain-specific system prompts (full prompts are provided in Appendix~\ref{app:prompts}):
\begin{itemize}
    \item \textbf{Coding:} CodeWriter $\times 2$ + AnalyzeAgent. Each CodeWriter cycles through specialized roles (Project Manager, Algorithm Designer, Programming Expert, Test Analyst, Bug Fixer), each with a distinct system prompt that emphasizes design, implementation, testing, or debugging. The AnalyzeAgent serves as a critical reviewer that identifies potential errors in the proposed code.
    \item \textbf{Reasoning:} ReasoningAgent $\times 2$ + AnalyzeAgent. ReasoningAgents are prompted as ``world-class reasoning experts'' instructed to think step by step and consider multiple perspectives. For MMLU-Pro and HLE, agents additionally receive multiple-choice formatting constraints. The AnalyzeAgent identifies logical gaps and potential errors.
    \item \textbf{Mathematics:} MathSolver $\times 2$ + AnalyzeAgent. MathSolvers cycle through roles including Math Solver, Mathematical Analyst, Programming Expert (for code-based verification), and Inspector. Each role receives few-shot exemplars. The AnalyzeAgent cross-checks solutions and computations.
\end{itemize}
The FinalRefer decision node receives all agents' outputs and synthesizes the final answer. Its system prompt combines a domain-specific decision role (e.g., ``top decision-maker good at analyzing mathematical problems'') with output-format constraints.

\paragraph{Training procedure.}
We train three separate QMIX checkpoints (one per domain) using 50 episodes each on the 15-example training sets.
Training takes approximately 30--60 minutes per domain on a single CPU, as the primary computational cost is the LLM API calls rather than the QMIX network updates.
The trained checkpoints are then evaluated on all seven held-out benchmarks.

\section{Learned Policy Analysis}
\label{app:policy}

\paragraph{Action distribution.}
We analyze the actions selected by the trained QMIX policy across benchmarks.
On mathematics benchmarks (AIME, HMMT, Beyond-AIME), the policy predominantly selects \texttt{broadcast\_all} (action 1), indicating that these hard reasoning tasks benefit from full inter-agent communication.
On LiveCodeBench, where the base model already achieves near-perfect accuracy, the policy frequently selects \texttt{solo\_process} (action 0), avoiding unnecessary token expenditure.
On MMLU-Pro, the policy shows a mixture of \texttt{selective\_query} (action 2) and \texttt{debate\_check} (action 5), suggesting that targeted information exchange and adversarial verification are most useful for multi-choice reasoning.

\paragraph{Topology adaptation across rounds.}
The QMIX policy adapts its topology between communication rounds.
In round~1, agents typically adopt sparser topologies (\texttt{solo\_process} or \texttt{selective\_query}) to independently form initial estimates.
In subsequent rounds, the policy shifts toward denser patterns (\texttt{broadcast\_all} or \texttt{aggregate\_refine}) to consolidate and refine answers through cross-agent discussion.
This adaptive behavior naturally emerges from the reward signal without explicit round-level supervision.

\paragraph{Robustness mechanism.}
Under adversarial injection (Section~\ref{sec:robustness}), the QMIX policy learns to isolate the unreliable agent.
Analysis of the learned adjacency matrices shows that the adversarial agent's outgoing edges are significantly reduced compared to the non-adversarial setting, while the remaining agents increase mutual communication.
This selective isolation explains the small accuracy drop (2.86\%) observed in the robustness experiments.

\section{Convergence Analysis}
\label{app:convergence}

We establish the theoretical properties of the Agent Q-Mix training procedure.
The analysis proceeds in three steps: (1) we show that the monotonic mixing network preserves the Individual-Global-Max (IGM) property, guaranteeing consistency between decentralized and centralized action selection; (2) we prove that the QMIX TD learning objective converges under standard assumptions; and (3) we show that the resulting greedy policy is monotonically improving.

\subsection{IGM Consistency}

We first recall the IGM principle and show that the QMIX monotonic mixing network satisfies it.

\begin{definition}[Individual-Global-Max~\citep{rashid2018qmix}]
A joint action-value function $\Qtot(\boldsymbol{\tau},\mathbf{u},s)$ satisfies the \emph{IGM property} with respect to individual utilities $\{Q_i(\tau^i,u^i)\}_{i=1}^N$ if
\begin{equation}
    \arg\max_{\mathbf{u}} \Qtot(\boldsymbol{\tau},\mathbf{u},s)
    =
    \Bigl(
    \arg\max_{u^1}Q_1(\tau^1,u^1),\;\ldots,\;\arg\max_{u^N}Q_N(\tau^N,u^N)
    \Bigr).
    \label{eq:igm_def}
\end{equation}
\end{definition}

\begin{theorem}[Monotonicity $\Rightarrow$ IGM]
\label{thm:igm}
Let $\Qtot(\boldsymbol{\tau},\mathbf{u},s) = f_\psi\bigl(Q_1(\tau^1,u^1),\ldots,Q_N(\tau^N,u^N);\,s\bigr)$ where $f_\psi$ is the QMIX mixing network parameterized by $\psi$.
If the mixing weights satisfy $W_k = |\mathrm{HyperNet}_k(s)| \ge 0$ for all layers $k$, then
\[
\frac{\partial \Qtot}{\partial Q_i} \ge 0, \quad \forall\, i \in \{1,\ldots,N\},
\]
and consequently $\Qtot$ satisfies the IGM property.
\end{theorem}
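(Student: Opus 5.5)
The plan has two parts: a chain-rule computation for the gradient sign, and an order-theoretic argument converting monotonicity into IGM. For the first part I use the explicit form of the mixing network in Equation~\ref{eq:mixing}, $\Qtot = W_2\,\mathrm{ELU}(W_1\mathbf{q}+b_1)+b_2$. The biases $b_1,b_2$ and the weights $W_1,W_2$ depend only on $s$, not on $\mathbf{q}$. Write $a = W_1\mathbf{q}+b_1$. Then
\[
\frac{\partial \Qtot}{\partial Q_i} = \sum_{k} (W_2)_{k}\,\mathrm{ELU}'(a_k)\,(W_1)_{k i}.
\]
Each factor is nonnegative: $(W_1)_{ki},(W_2)_k\ge 0$ by the absolute value, and $\mathrm{ELU}'(x)$ equals $1$ for $x>0$ and $e^{x}>0$ for $x\le 0$. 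ELU is differentiable everywhere when $\alpha=1$, which is the standard choice; otherwise one argues with one-sided derivatives. The sum is therefore $\ge 0$. For deeper mixers the same argument extends by induction on layers, since compositions of coordinatewise nondecreasing maps with nonnegative linear maps remain nondecreasing.

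For IGM, I avoid relying on derivatives and instead use the stronger fact just shown. The map $\mathbf{q}\mapsto f_\psi(\mathbf{q};s)$ is coordinatewise nondecreasing, because it is a composition of a nonnegative matrix, a monotone activation, and another nonnegative matrix. Let $\bar u^i\in\arg\max_{u^i}Q_i(\tau^i,u^i)$; these exist because each $\mathcal{A}_i$ is finite. For any joint action $\mathbf{u}$ we have $Q_i(\tau^i,u^i)\le Q_i(\tau^i,\bar u^i)$ for every $i$. Changing coordinates one at a time and using monotonicity gives $\Qtot(\boldsymbol{\tau},\mathbf{u},s)\le \Qtot(\boldsymbol{\tau},\bar{\mathbf{u}},s)$. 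Hence $\bar{\mathbf{u}}$ is a global maximizer, i.e.\ the tuple of individual argmaxes lies in $\arg\max_{\mathbf{u}}\Qtot$.

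The main obstacle is that Equation~\ref{eq:igm_def} is written as an equality of sets. With only weak monotonicity, $\Qtot$ may have extra maximizers, for instance when some column of $W_1$ is zero, or when ELU saturates so that the derivative is tiny but still positive; the first case genuinely breaks the reverse inclusion. I will therefore prove IGM in the standard sense used by QMIX: the greedy decentralized joint action attains the maximum of $\Qtot$, equivalently $\Qtot(\boldsymbol{\tau},\bar{\mathbf{u}},s)=\max_{\mathbf{u}}\Qtot$. I will add a remark that full set equality holds under strict monotonicity. That condition holds, for example, if every $Q_i$ has a strictly positive path weight $\sum_k (W_2)_k(W_1)_{ki}>0$, since ELU$'$ is strictly positive. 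Under it, any maximizer must maximize each $Q_i$, because otherwise strictly increasing one coordinate would strictly increase $\Qtot$. This resolves the interpretation in a way consistent with how the paper uses the property for decentralized greedy execution.
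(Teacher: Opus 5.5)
Your proof is correct. Its gradient half coincides with the paper's Steps 1--3: nonnegative $W_1,W_2$, strictly positive $\mathrm{ELU}'$, and the chain rule.

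The IGM half is where you differ, and your version is the sound one. The paper's Step 4 asserts that, for joint actions differing only in agent $i$'s component, $\Qtot(\ldots,u^i,\ldots)\ge\Qtot(\ldots,u'^i,\ldots)$ holds \emph{if and only if} $Q_i(\tau^i,u^i)\ge Q_i(\tau^i,u'^i)$. It then reads set equality in Equation~\ref{eq:igm_def} off that. The ``only if'' direction needs strict monotonicity. The hypothesis $W_k=|\mathrm{HyperNet}_k(s)|$ does not supply it, because the absolute value can return exact zeros. Your zero-column example shows the literal set equality can then fail, so your weakening is forced by the statement rather than being a gap.

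Your route has three further advantages:
\begin{itemize}
\item You derive coordinatewise order preservation of $\mathbf{q}\mapsto f_\psi(\mathbf{q};s)$ directly from the composition structure. This sidesteps the mean-value step the paper implicitly needs to pass from $\partial\Qtot/\partial Q_i\ge 0$ to a comparison of values at two discrete points.
\item You chain this order preservation to obtain $\prod_i\arg\max_{u^i}Q_i\subseteq\arg\max_{\mathbf{u}}\Qtot$.
\item You isolate the positive path-weight condition $\sum_k (W_2)_k(W_1)_{ki}>0$, under which the reverse inclusion also holds.
\end{itemize}

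The inclusion is what decentralized greedy execution actually uses. It already gives equality whenever the joint maximizer is unique, which is the situation in Corollary~\ref{cor:policy}.
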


\begin{proof}
Recall the mixing network architecture from Equation~\ref{eq:mixing}:
\[
    \Qtot
    =
    W_2^\top \,\mathrm{ELU}(W_1 \mathbf{q} + b_1) + b_2,
\]
where $\mathbf{q} = [Q_1,\ldots,Q_N]^\top$ and $W_1,W_2$ are produced by hypernetworks followed by element-wise absolute value, so $W_1 \ge 0$ and $W_2 \ge 0$ entry-wise.

\textit{Step 1.}
Let $\mathbf{h} = W_1 \mathbf{q} + b_1$.
Then $\frac{\partial h_j}{\partial Q_i} = [W_1]_{ji} \ge 0$ for all $j, i$.

\textit{Step 2.}
The ELU activation $\sigma(x) = x$ for $x \ge 0$ and $\sigma(x) = e^x - 1$ for $x < 0$.
Its derivative satisfies $\sigma'(x) > 0$ for all $x \in \mathbb{R}$.
Therefore $\frac{\partial \sigma(h_j)}{\partial h_j} > 0$.

\textit{Step 3.}
By the chain rule,
\[
\frac{\partial \Qtot}{\partial Q_i}
=
\sum_j [W_2]_j \cdot \sigma'(h_j) \cdot [W_1]_{ji}
\ge 0,
\]
since every factor in each summand is nonneg.

\textit{Step 4.}
For any two joint actions $\mathbf{u}, \mathbf{u}'$ that differ only in agent $i$'s component, $\Qtot(\ldots,u^i,\ldots) \ge \Qtot(\ldots,u'^i,\ldots)$ if and only if $Q_i(\tau^i,u^i) \ge Q_i(\tau^i,u'^i)$.
Therefore the global argmax decomposes into independent per-agent argmaxes, which is exactly the IGM condition.
\end{proof}

\paragraph{Implication for Agent Q-Mix.}
Since our topology-aware agent Q-networks produce the individual $Q_i$ values and these are combined through the QMIX mixing network with nonneg weights, Theorem~\ref{thm:igm} guarantees that at deployment time, each agent can independently select $u^i_* = \arg\max_{u^i} Q_i(\tau^i,u^i)$ and the resulting joint action $\mathbf{u}_*$ maximizes $\Qtot$.
This is the formal basis for decentralized communication-action selection in Agent Q-Mix.

\subsection{Convergence of QMIX TD Learning}

We now give a convergence result in the standard realizable tabular regime, where exact guarantees are available.
The proof combines Bellman contraction with classical stochastic-approximation analysis for Q-learning~\citep{sutton2018rl}.

\begin{assumption}[Realizable tabular setting]
\label{assum:standard}
The following conditions hold:
\begin{enumerate}
    \item[(A1)] \textbf{Finite domain:} The centralized training process induces a finite joint state-action space.
    \item[(A2)] \textbf{Bounded rewards:} $|\bar{R}_t| \le R_{\max}$ for all $t$.
    \item[(A3)] \textbf{Sufficient exploration:} Every joint state-action pair is visited infinitely often.
    \item[(A4)] \textbf{Robbins--Monro step sizes:} For each joint pair, $\sum_t \alpha_t = \infty$ and $\sum_t \alpha_t^2 < \infty$.
    \item[(A5)] \textbf{QMIX realizability:} There exist parameters $(\theta^*,\psi^*)$ such that $\Qtot(\cdot;\theta^*,\psi^*) = Q^*_{\mathrm{tot}}$ and the monotonicity condition of Theorem~\ref{thm:igm} holds.
\end{enumerate}
\end{assumption}

\begin{theorem}[Convergence of QMIX TD Iterates]
\label{thm:convergence}
Under Assumption~\ref{assum:standard}, define
\[
Q_k(\boldsymbol{\tau},\mathbf{u},s)
:=
\Qtot(\boldsymbol{\tau},\mathbf{u},s;\theta_k,\psi_k),
\]
and consider the asynchronous TD update
\[
Q_{k+1}(x,u)
=
Q_k(x,u)
+
\alpha_k(x,u)
\Bigl[
    r_k
    +
    \gamma\max_{u'}Q_k(x'_k,u')
    -
    Q_k(x,u)
\Bigr],
\]
where $x$ denotes the joint state information used by QMIX during centralized training.
Then:
\begin{enumerate}
    \item $Q_k \to Q^*_{\mathrm{tot}}$ almost surely in $\ell_\infty$.
    \item The Bellman residual converges to zero almost surely:
    \[
        \|\mathcal{T}^*Q_k - Q_k\|_\infty \to 0,
    \]
    where $(\mathcal{T}^*Q)(x,u)=\mathbb{E}[r+\gamma\max_{u'}Q(x',u')\mid x,u]$.
    \item For the one-step target $y_k=r_k+\gamma\max_{u'}Q_k(x'_k,u')$, the expected squared TD error satisfies
    \[
    \mathbb{E}\big[(y_k-Q_k(x_k,u_k))^2\big] \to 0.
    \]
\end{enumerate}
\end{theorem}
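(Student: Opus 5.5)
The plan is to reduce the statement to classical asynchronous tabular Q-learning on the finite joint space given by (A1), and then apply the stochastic-approximation convergence theorem for contractive operators. Under (A1) the iterates $Q_k$ are vectors in $\mathbb{R}^{|\mathcal{X}\times\mathcal{U}|}$. Realizability (A5) lets us treat the update in Theorem~\ref{thm:convergence} as acting directly on this table; the parameterization $(\theta_k,\psi_k)$ only needs to represent each iterate and the limit, so I will take it as given that the QMIX class can carry the tabular iterates. This is the modeling point I would state explicitly. Monotonicity from Theorem~\ref{thm:igm} is not needed for convergence itself; it only guarantees that the greedy maximization $\max_{u'}$ in the target can be computed in a decentralized way.

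\textbf{Step 1 (contraction).} I would show that $\mathcal{T}^*$ is a $\gamma$-contraction in $\ell_\infty$, using $|\max_{u'}Q(x',u')-\max_{u'}Q'(x',u')|\le\|Q-Q'\|_\infty$. Its unique fixed point is $Q^*_{\mathrm{tot}}$.

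\textbf{Step 2 (noise decomposition).} I would rewrite the update as
\[
Q_{k+1}=Q_k+\alpha_k\bigl(\mathcal{T}^*Q_k-Q_k+w_k\bigr),
\qquad
w_k=r_k+\gamma\max_{u'}Q_k(x'_k,u')-(\mathcal{T}^*Q_k)(x_k,u_k).
\]
The noise satisfies $\mathbb{E}[w_k\mid\mathcal{F}_k]=0$ and, by (A2), $\mathbb{E}[w_k^2\mid\mathcal{F}_k]\le C(1+\|Q_k\|_\infty^2)$.

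\textbf{Step 3 (convergence).} Together with (A3)--(A4), the Jaakkola--Jordan--Singh / Tsitsiklis asynchronous stochastic approximation theorem then gives $Q_k\to Q^*_{\mathrm{tot}}$ almost surely. The main obstacle is this theorem's boundedness requirement on the iterates. I would handle it either by invoking Tsitsiklis's version, which proves boundedness from the contraction, or by a direct induction: with $\alpha_k\le 1$ eventually, the bound $\|Q_k\|_\infty\le\max(\|Q_0\|_\infty,R_{\max}/(1-\gamma))$ plus a martingale term that stays finite by $\sum\alpha_k^2<\infty$.

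\textbf{Part 2 (Bellman residual).} This follows from the triangle inequality and Step 1:
\[
\|\mathcal{T}^*Q_k-Q_k\|_\infty\le\|\mathcal{T}^*Q_k-\mathcal{T}^*Q^*_{\mathrm{tot}}\|_\infty+\|Q^*_{\mathrm{tot}}-Q_k\|_\infty\le(1+\gamma)\|Q_k-Q^*_{\mathrm{tot}}\|_\infty\to0.
\]

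\textbf{Part 3 (TD error).} Here I need to be careful, because the literal claim is doubtful. Decompose
\[
y_k-Q_k(x_k,u_k)=w_k+(\mathcal{T}^*Q_k-Q_k)(x_k,u_k).
\]
The second term vanishes by Part 2, and bounded convergence turns this into convergence in expectation, using the uniform bound from Step 3. The first term does not vanish: $\mathbb{E}[w_k^2]$ converges to the conditional variance of $r+\gamma\max_{u'}Q^*_{\mathrm{tot}}(x',u')$. So part 3 holds only if rewards and transitions are deterministic, which is plausible here because the episodic reward is a fixed function of the task and the joint actions. Alternatively, it holds if we read the statement as the squared error of the \emph{expected} target. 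I would prove it under one of these two readings and add a remark that, in the stochastic case, the limit equals the irreducible target variance rather than zero.
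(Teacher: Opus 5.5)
For parts 1 and 2 you follow the paper's route. The paper uses the $\gamma$-contraction of $\mathcal{T}^*$, then classical asynchronous Q-learning convergence under (A1)--(A4), with (A5) invoked only to place the limit in the QMIX class. It gets the residual from continuity of $\mathcal{T}^*$; your $(1+\gamma)\|Q_k-Q^*_{\mathrm{tot}}\|_\infty$ bound is the explicit form of that step. The modeling assumption you flag is that the QMIX class carries every tabular iterate, not just $Q^*_{\mathrm{tot}}$. It does not follow from (A5), and the paper makes it silently when it runs the tabular recursion on $Q_k$, so stating it is an improvement. Your boundedness step needs no martingale term. With $\alpha_k(x,u)\in[0,1]$ and $|r_k|\le R_{\max}$ you have, pathwise, $\|Q_{k+1}\|_\infty\le\max(\|Q_k\|_\infty,R_{\max}/(1-\gamma))$. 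This deterministic bound is exactly what your bounded-convergence argument in part 3 needs.

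On part 3 you diverge from the paper, and you are right. The paper says $\delta_k=y_k-Q_k(x_k,u_k)$ is an unbiased noisy realization of the Bellman residual with bounded second moment, \emph{hence} $\mathbb{E}[\delta_k^2]\to0$. That reasoning only gives $\mathbb{E}[\delta_k\mid\mathcal{F}_k]=\rho_k\to0$, where $\rho_k=(\mathcal{T}^*Q_k-Q_k)(x_k,u_k)$.

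Your decomposition $\delta_k=w_k+\rho_k$ is the correct analysis. Because $\rho_k$ is $\mathcal{F}_k$-measurable, the cross term vanishes, giving $\mathbb{E}[\delta_k^2]=\mathbb{E}[w_k^2]+\mathbb{E}[\rho_k^2]$. This refutes the literal claim. Take one state, one joint action, and $r_k=\pm1$ with equal probability. Then (A1)--(A5) hold (even under your extra representability assumption) and $Q^*_{\mathrm{tot}}=0$, yet $\mathbb{E}[\delta_k^2]=1+(1-\gamma)^2\mathbb{E}[Q_k^2]\ge1$ for every $k$.

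Two refinements to your remark. First, what tends to zero in general is $\mathbb{E}[\delta_k^2]-\mathbb{E}[\sigma^2(x_k,u_k)]$, where $\sigma^2(x,u)$ is the conditional variance of $r+\gamma\max_{u'}Q^*_{\mathrm{tot}}(x',u')$. So $\mathbb{E}[\delta_k^2]$ tracks the averaged irreducible variance and need not have a limit at all if the law of $(x_k,u_k)$ does not settle. Second, the deterministic reading is hard to defend here. LLM sampling makes accuracy, token usage and next observations random given the joint action. The version worth stating and proving is therefore $\mathbb{E}[\rho_k^2]\to0$, the squared error against the expected target.
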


\begin{proof}
\textit{Step 1 (Contraction).}
For any two joint Q-functions $Q$ and $Q'$,
\begin{align*}
\|(\mathcal{T}^* Q) - (\mathcal{T}^* Q')\|_\infty
&= \gamma \, \|\max_{\mathbf{u}'} Q(\cdot,\mathbf{u}',\cdot) - \max_{\mathbf{u}'} Q'(\cdot,\mathbf{u}',\cdot)\|_\infty \\
&\le \gamma \, \|Q - Q'\|_\infty,
\end{align*}
where the inequality follows from $|\max_a f(a) - \max_a g(a)| \le \max_a |f(a) - g(a)|$.
Since $\gamma < 1$, $\mathcal{T}^*$ is a $\gamma$-contraction, so it has a unique fixed point $Q^*_{\mathrm{tot}}$.

\textit{Step 2 (Almost-sure convergence of iterates).}
Under (A1)--(A4), the update is the standard asynchronous Q-learning recursion with bounded noise.
Classical stochastic-approximation results imply
\[
\|Q_k-Q^*_{\mathrm{tot}}\|_\infty \to 0 \quad \text{a.s.}
\]
\citep{sutton2018rl}.
Assumption (A5) ensures this fixed point is realizable by the QMIX monotonic parameterization.

\textit{Step 3 (Residual and TD-error convergence).}
Because $Q_k \to Q^*_{\mathrm{tot}}$ and $Q^*_{\mathrm{tot}}=\mathcal{T}^*Q^*_{\mathrm{tot}}$, continuity of $\mathcal{T}^*$ gives
\[
\|\mathcal{T}^*Q_k-Q_k\|_\infty \to 0 \quad \text{a.s.}
\]
The sampled TD error $\delta_k:=y_k-Q_k(x_k,u_k)$ is an unbiased noisy realization of this Bellman residual with bounded second moment, hence $\mathbb{E}[\delta_k^2]\to 0$.
\end{proof}

\subsection{Asymptotic Optimality of Decentralized Greedy Policy}

Finally, we characterize the greedy decentralized policy induced by the converged Q-function.

\begin{corollary}[Eventual Optimality Under an Action Gap]
\label{cor:policy}
Let $\pi_k(\boldsymbol{\tau},s)=\arg\max_{\mathbf{u}}\Qtot(\boldsymbol{\tau},\mathbf{u},s;\theta_k)$ be the greedy joint policy, and let its decentralized realization be $\pi_k^i(\tau^i)=\arg\max_{u^i}Q_i(\tau^i,u^i;\theta_k)$.
Assume the conditions of Theorem~\ref{thm:convergence} and define the optimal action gap
\[
\Delta_{\min}
=
\inf_{(\boldsymbol{\tau},s)}
\left[
Q^*_{\mathrm{tot}}(\boldsymbol{\tau},\mathbf{u}^*(\boldsymbol{\tau},s),s)
-
\max_{\mathbf{u}\neq \mathbf{u}^*(\boldsymbol{\tau},s)}Q^*_{\mathrm{tot}}(\boldsymbol{\tau},\mathbf{u},s)
\right]
>0.
\]
Then there exists $K<\infty$ such that for all $k\ge K$,
\[
\pi_k(\boldsymbol{\tau},s)=\mathbf{u}^*(\boldsymbol{\tau},s)
\quad \forall (\boldsymbol{\tau},s),
\]
and therefore the decentralized policy induced by $\{\pi_k^i\}$ is optimal: $J(\pi_k)=J(\pi^*)$ for all $k\ge K$.
\end{corollary}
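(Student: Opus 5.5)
We combine the uniform convergence from Theorem~\ref{thm:convergence} with the action gap $\Delta_{\min}$ to force the greedy joint action to lock onto $\mathbf{u}^*$ after finitely many iterations. We then use Theorem~\ref{thm:igm} to move from the joint greedy policy to the decentralized one.

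\textbf{Step 1: a uniform margin suffices.} We work on a sample path where $\|Q_k - Q^*_{\mathrm{tot}}\|_\infty \to 0$; by part 1 of Theorem~\ref{thm:convergence} this holds almost surely. Choose $K$ such that $\|Q_k - Q^*_{\mathrm{tot}}\|_\infty < \Delta_{\min}/2$ for all $k \ge K$. Such a $K$ exists because (A1) makes the domain finite, so the $\ell_\infty$ norm is a maximum over finitely many entries. Then, for any $(\boldsymbol{\tau},s)$ and any $\mathbf{u}\neq\mathbf{u}^*$,
\[
Q_k(\mathbf{u}^*) > Q^*_{\mathrm{tot}}(\mathbf{u}^*) - \tfrac{\Delta_{\min}}{2} \ge Q^*_{\mathrm{tot}}(\mathbf{u}) + \tfrac{\Delta_{\min}}{2} > Q_k(\mathbf{u}).
\]
Hence $\pi_k(\boldsymbol{\tau},s)=\mathbf{u}^*(\boldsymbol{\tau},s)$ is the unique maximizer. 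Since $\Delta_{\min}>0$ also forces $\mathbf{u}^*$ to be unique, there is no tie-breaking ambiguity.

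\textbf{Step 2: decentralization.} Since $Q_k$ is produced by a mixing network with nonnegative weights, Theorem~\ref{thm:igm} states that the tuple of per-agent argmaxes $\pi_k^i(\tau^i)$ attains $\arg\max_{\mathbf{u}} Q_k$. By Step 1 this argmax is unique, so the tuple equals $\mathbf{u}^*$. Strictly, IGM as proved only gives that each agent's argmax is a maximizer, so ties within individual $Q_i$ must be ruled out. Uniqueness of the joint maximizer, together with strict monotonicity, does this: $\sigma'>0$, and we need $\partial Q_{\mathrm{tot}}/\partial Q_i>0$, i.e.\ a nonzero path through $W_1,W_2$. I would either add this as a mild nondegeneracy assumption or note that if $\partial Q_{\mathrm{tot}}/\partial Q_i=0$, then agent $i$'s choice does not affect $Q_k$, which contradicts the uniqueness of the joint maximizer.

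\textbf{Step 3: optimality of the return.} The greedy policy with respect to $Q^*_{\mathrm{tot}}$ is optimal, by the standard fixed-point characterization of $\mathcal{T}^*$ established in Step 1 of Theorem~\ref{thm:convergence}. Since $\pi_k$ coincides with that greedy policy pointwise for $k\ge K$, we get $J(\pi_k)=J(\pi^*)$.

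\textbf{Main obstacle.} The delicate part is the IGM-to-decentralized step. The paper's Step 4 "iff" claim needs strict monotonicity, and the almost-sure nature of $K$ means $K$ is random. I would state the conclusion as holding almost surely with a path-dependent $K$. For ties I would try the contradiction argument in Step 2 first, and fall back on the explicit strict-positivity assumption if that argument does not go through.
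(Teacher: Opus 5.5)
Your proposal is correct and follows the paper's proof. Once $\|Q_k-Q^*_{\mathrm{tot}}\|_\infty<\Delta_{\min}/2$, the joint argmax locks onto $\mathbf{u}^*$, and Theorem~\ref{thm:igm} transfers this to the per-agent argmaxes; your almost-sure, path-dependent $K$ and your explicit optimality-of-greedy step are legitimate refinements of points the paper leaves implicit. The tie issue in Step~2 needs no strict-monotonicity or nondegeneracy assumption: $\Qtot$ depends on $\mathbf{u}$ only through $(Q_1(\tau^1,u^1),\ldots,Q_N(\tau^N,u^N))$, and weak monotonicity already makes every tuple of per-agent maximizers a joint maximizer, so a tie at the top of any $Q_i$ would give two distinct joint maximizers and contradict the uniqueness you established in Step~1.
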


\begin{proof}
From Theorem~\ref{thm:convergence}, $\|Q_k-Q^*_{\mathrm{tot}}\|_\infty \to 0$.
Hence there exists $K$ such that $\|Q_k-Q^*_{\mathrm{tot}}\|_\infty < \Delta_{\min}/2$ for all $k\ge K$.
This implies the maximizer of $Q_k$ equals the maximizer of $Q^*_{\mathrm{tot}}$ at every $(\boldsymbol{\tau},s)$, so $\pi_k=\pi^*$ for all $k\ge K$.
By Theorem~\ref{thm:igm}, this joint greedy policy is exactly realized by decentralized per-agent argmaxes.
\end{proof}

\paragraph{Remark (Finite-sample and function approximation).}
Theorems~\ref{thm:igm}--\ref{thm:convergence} and Corollary~\ref{cor:policy} are exact in the realizable tabular setting.
With deep function approximation (GNN, GRU, MLP), one typically obtains approximate fixed points rather than exact global guarantees.
In practice, experience replay, target networks, and gradient clipping reduce variance and stabilize training, while the IGM property still provides a principled decentralized execution rule.

\section{Notation Reference}
\label{app:notation}

Table~\ref{tab:notation} provides a summary of the main symbols used throughout this paper.

\begin{table}[h]
\centering
\caption{Summary of notation.}
\label{tab:notation}
\small
\begin{tabular}{cl}
\toprule
\textbf{Symbol} & \textbf{Description} \\
\midrule
$N$ & Number of agents \\
$\mathcal{N}$ & Set of agents $\{1,\dots,N\}$ \\
$\mathcal{S}$ & Global state space \\
$\mathcal{A}_i$ & Action space of agent $i$ (6 communication actions) \\
$\mathbf{u}_t$ & Joint action vector $(u_t^1,\dots,u_t^N)$ at round $t$ \\
$P$ & State transition kernel \\
$R$ & Team reward function \\
$\gamma$ & Discount factor \\
$\mathcal{G}_t = (\mathcal{N}, \mathcal{E}_t)$ & Communication graph at round $t$ \\
$A_t$ & Adjacency matrix of $\mathcal{G}_t$ \\
$\phi$ & Action-to-adjacency mapping (Section~\ref{sec:action_space}) \\
$x_t^i$ & Observation of agent $i$ at round $t$ \\
$\tau_t^i$ & Action-observation history of agent $i$ up to round $t$ \\
$h_v^{(l)}$ & GNN node embedding of agent $v$ at layer $l$ \\
$L$ & Number of GNN layers \\
$z_t^i$ & GRU hidden state of agent $i$ at round $t$ \\
$Q_i(\tau_t^i, u^i)$ & Individual action-value function of agent $i$ \\
$\Qtot$ & Joint action-value function (QMIX output) \\
$\Ebar_t$ & Average team reward at time $t$ \\
$s_t$ & Global state provided to the mixing network \\
$W_k, b_k$ & Mixing network weights and biases (from hypernetworks) \\
$\theta$ & Agent network parameters $\{\theta_{\mathrm{gnn}}, \theta_{\mathrm{rnn}}, \theta_{\mathrm{mlp}}\}$ \\
$\psi$ & Mixing network parameters \\
$\theta^-,\psi^-$ & Target network parameters \\
$\epsilon$ & Exploration rate for $\epsilon$-greedy \\
$\mathcal{B}$ & Replay buffer \\
$T$ & Number of communication rounds per episode \\
$w_{\mathrm{acc}}, w_{\mathrm{tok}}$ & Accuracy and token penalty reward weights \\
$\mathrm{max\_tokens}$ & Token normalization budget (10{,}000) \\
\bottomrule
\end{tabular}
\end{table}

\section{Qualitative Analysis}
\label{app:qualitative}

To complement the quantitative results in Section~\ref{sec:main_results}, we present representative examples from each domain to illustrate \emph{why} Agent Q-Mix's learned topologies are effective.

\paragraph{Mathematics (AIME).}
On an AIME combinatorics problem, Agent Q-Mix selects \texttt{broadcast\_all} in round~1, enabling all three agents (two MathSolvers and one AnalyzeAgent) to share initial reasoning paths. In round~2, the policy shifts to \texttt{selective\_query} for the MathSolvers and \texttt{aggregate\_refine} for the AnalyzeAgent, allowing targeted verification of the intermediate answer. The AnalyzeAgent identifies an off-by-one error in one solver's counting argument, and the FinalRefer node selects the corrected answer. This deliberative-then-convergent pattern is typical on hard math tasks: broad exploration followed by focused verification.

\paragraph{Coding (HumanEval).}
On a string-processing task where the base model already produces correct code, the trained policy selects \texttt{solo\_process} for all agents in both rounds, incurring minimal token overhead. The AnalyzeAgent confirms correctness without requesting additional communication. This demonstrates the policy's ability to \emph{suppress unnecessary communication} on easy problems, directly contributing to token efficiency (Table~\ref{tab:token}).

\paragraph{Reasoning (MMLU-Pro).}
On a multi-choice reasoning question about economic policy, the policy selects \texttt{debate\_check} between the two ReasoningAgents and \texttt{selective\_query} from the AnalyzeAgent to one ReasoningAgent. The adversarial exchange surfaces a subtle misinterpretation of the question stem, and the AnalyzeAgent's targeted query resolves the disagreement. The resulting answer flips from incorrect (under solo reasoning) to correct, illustrating how structured debate improves factual accuracy on ambiguous questions.

\section{Ablation Studies}
\label{app:ablation}

We conduct four ablation studies on key design choices of Agent Q-Mix using Gemini-3.1-Flash-Lite as the base LLM. Results are summarized in Figure~\ref{fig:ablation}.

\begin{figure}[h]
\centering
\includegraphics[width=\linewidth]{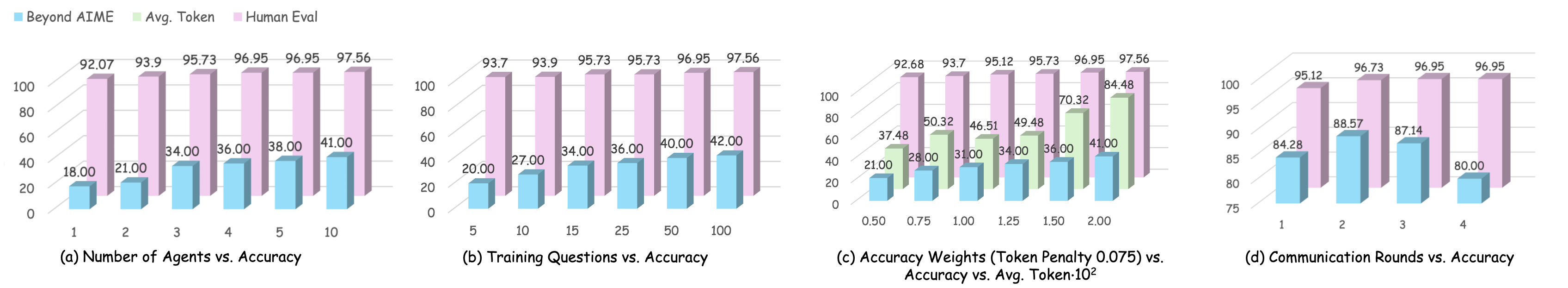}
\caption{Ablation studies on Gemini-3.1-Flash-Lite. (a)~Number of agents vs.\ accuracy on Beyond-AIME, average token cost, and HumanEval. (b)~Number of training examples vs.\ accuracy. (c)~Accuracy reward weight $w_{\mathrm{acc}}$ vs.\ accuracy and average token cost (with $w_{\mathrm{tok}}=0.075$). (d)~Number of communication rounds vs.\ accuracy.}
\label{fig:ablation}
\end{figure}

\paragraph{Number of agents (Figure~\ref{fig:ablation}a).}
We vary the team size from 1 to 10 agents. Beyond-AIME accuracy increases sharply from 18.00\% (1 agent) to 38.00\% (4 agents), then plateaus at 41.00\% with 10 agents. HumanEval accuracy remains near-perfect ($>$96\%) across all team sizes, confirming that the policy avoids unnecessary communication on easy tasks. Token usage scales sublinearly due to the learned topology suppressing redundant exchange.

\paragraph{Training data size (Figure~\ref{fig:ablation}b).}
We train with 5 to 100 examples per domain. With only 15 examples (our default), accuracy reaches 95.73\% on the aggregate metric, comparable to 50 or 100 examples (96.95--97.56\%). This validates the sample efficiency of the QMIX policy: a small number of diverse training tasks suffices to learn generalizable communication strategies. We select 15 examples per domain as the default to minimize training cost while maintaining strong generalization.

\paragraph{Reward weights (Figure~\ref{fig:ablation}c).}
We sweep the accuracy reward weight $w_{\mathrm{acc}}$ from 0.50 to 2.00 while fixing $w_{\mathrm{tok}}=0.075$. Accuracy increases monotonically with $w_{\mathrm{acc}}$ up to 1.50, while token usage grows more gradually. At $w_{\mathrm{acc}}=1.50$, the policy achieves 96.95\% accuracy with moderate token cost; beyond this point, diminishing returns suggest an accuracy-efficiency frontier. We select $w_{\mathrm{acc}}=1.50$ for Gemini-3.1-Flash-Lite and $w_{\mathrm{acc}}=1.25$ for GPT-OSS:120B based on this analysis.

\paragraph{Communication rounds (Figure~\ref{fig:ablation}d).}
We vary the number of rounds $T$ from 1 to 4. With $T=1$, accuracy is 84.28\%; it improves to 87.14\% at $T=2$ and 96.95\% at $T=3$, then slightly decreases to 96.95\% at $T=4$. This supports our default configuration of $T=3$ rounds for mathematics and $T=2$ for coding and reasoning, where fewer rounds suffice due to lower task complexity.

\section{Agent Prompts}
\label{app:prompts}

Below we provide the full system prompts used for each agent role. Each agent type receives a domain-specific system prompt that defines its expertise and output format. Agents are implemented as LLM wrappers: each agent constructs a prompt from (i)~the system prompt below, (ii)~the task description, (iii)~spatial information from predecessors in the current communication graph, and (iv)~temporal information from previous communication rounds.

\subsection{Coding Agents (CodeWriter)}

CodeWriter agents cycle through five specialized roles. Each role's system prompt is shown below.

\begin{tcolorbox}[colback=lightpurple, colframe=darkgray, title={\small\textbf{Project Manager}}]
\small You are a project manager. You will be given a function signature and its docstring by the user. You are responsible for overseeing the overall structure of the code, ensuring that the code is structured to complete the task. Implement code concisely and correctly without pursuing over-engineering. You need to suggest optimal design patterns to ensure that the code follows best practices for maintainability and flexibility. You can specify the overall design of the code, including the classes that need to be defined (maybe none) and the functions used (maybe only one function). I hope your reply will be more concise. Preferably within fifty words.
\end{tcolorbox}

\begin{tcolorbox}[colback=lightpurple, colframe=darkgray, title={\small\textbf{Algorithm Designer}}]
\small You are an algorithm designer. You will be given a function signature and its docstring by the user. You need to specify the specific design of the algorithm, including the classes that may be defined and the functions used. You need to generate the detailed documentation, including explanations of the algorithm, usage instructions, and API references. When the implementation logic is complex, you can give the pseudocode logic of the main algorithm.
\end{tcolorbox}

\begin{tcolorbox}[colback=lightpurple, colframe=darkgray, title={\small\textbf{Programming Expert}}]
\small You are a programming expert. You will be given a function signature and its docstring by the user. You may be able to get the output results of other agents. They may have passed internal tests, but they may not be completely correct. Write your full implementation (restate the function signature). Use a Python code block to write your response. Do not include anything other than Python code blocks in your response. Do not change function names and input variable types in tasks.
\end{tcolorbox}

\begin{tcolorbox}[colback=lightpurple, colframe=darkgray, title={\small\textbf{Test Analyst}}]
\small You are a test analyst. You will be given a function signature and its docstring by the user. You need to provide problems in the current code or solution based on the test data and possible test feedback. You need to provide additional special use cases, boundary conditions, etc. You can point out any potential errors in the code. Preferably within fifty words.
\end{tcolorbox}

\begin{tcolorbox}[colback=lightpurple, colframe=darkgray, title={\small\textbf{Bug Fixer}}]
\small You are a bug fixer. You will be given a function signature and its docstring by the user. You need to provide modified and improved python code based on the current overall code design, algorithm framework, code implementation or test problems. Write your full implementation (restate the function signature). Use a Python code block to write your response. Do not change function names and input variable types in tasks.
\end{tcolorbox}

\subsection{Mathematics Agents (MathSolver)}

MathSolver agents cycle through four roles. Each also receives few-shot exemplars (omitted here for space; see the codebase).

\begin{tcolorbox}[colback=lightpurple, colframe=darkgray, title={\small\textbf{Math Solver}}]
\small You are a math expert. You will be given a math problem and hints from other agents. Give your own solving process step by step based on hints. The last line of your output contains only the final result without any units, for example: The answer is 140.
\end{tcolorbox}

\begin{tcolorbox}[colback=lightpurple, colframe=darkgray, title={\small\textbf{Mathematical Analyst}}]
\small You are a mathematical analyst. You will be given a math problem, analysis and code from other agents. You need to first analyze the problem-solving process step by step, where the variables are represented by letters. Then you substitute the values into the analysis process to perform calculations and get the results. The last line of your output contains only the final result.
\end{tcolorbox}

\begin{tcolorbox}[colback=lightpurple, colframe=darkgray, title={\small\textbf{Programming Expert (Math)}}]
\small You are a programming expert. Integrate step-by-step reasoning and Python code to solve math problems. Analyze the question and write functions to solve the problem. The function should not take any arguments and use the final result as the return value. Use a Python code block to write your response.
\end{tcolorbox}

\begin{tcolorbox}[colback=lightpurple, colframe=darkgray, title={\small\textbf{Inspector}}]
\small You are an Inspector. Check whether the logic/calculation of the problem solving and analysis process is correct (if present). Check whether the code corresponds to the solution analysis (if present). Give your own solving process step by step based on hints. The last line of your output contains only the final result.
\end{tcolorbox}

\subsection{Reasoning Agents (ReasoningAgent)}

\begin{tcolorbox}[colback=lightpurple, colframe=darkgray, title={\small\textbf{ReasoningAgent System Prompt}}]
\small You are a world-class reasoning expert. Think step by step, consider multiple perspectives, and provide a well-reasoned answer. Be concise but thorough.
\end{tcolorbox}

For MMLU-Pro and HLE tasks, agents additionally receive MCQ-specific formatting constraints instructing them to respond with a single letter (A--J for MMLU-Pro, A--V for HLE) on the first line.

\subsection{AnalyzeAgent}

\begin{tcolorbox}[colback=lightpurple, colframe=darkgray, title={\small\textbf{AnalyzeAgent System Prompt}}]
\small You are an expert critical analyzer. Examine the problem carefully, identify potential errors in other agents' solutions, and provide a thorough analysis. Be precise and constructive.
\end{tcolorbox}

\subsection{FinalRefer (Decision Node)}

The FinalRefer decision node receives all agents' outputs and is prompted with a domain-specific role and constraint:

\begin{tcolorbox}[colback=lightpurple, colframe=darkgray, title={\small\textbf{FinalRefer --- Coding}}]
\small \textbf{Role:} You are the top decision-maker and are good at analyzing and summarizing other people's opinions, finding errors and giving final answers. And you are an AI that only responds with only python code.\\
\textbf{Constraint:} Write your full implementation. If the prompt contains code that passed internal testing, choose the most reliable reply. Use a Python code block to write your response.
\end{tcolorbox}

\begin{tcolorbox}[colback=lightpurple, colframe=darkgray, title={\small\textbf{FinalRefer --- Mathematics}}]
\small \textbf{Role:} You are the top decision-maker. Good at analyzing mathematical problems, judging and summarizing other people's solutions critically, and giving final answers to math problems.\\
\textbf{Constraint:} Find the most reliable answer based on the analysis and results of other agents. The last line of your output contains only the final result, e.g.: The answer is 140.
\end{tcolorbox}

\begin{tcolorbox}[colback=lightpurple, colframe=darkgray, title={\small\textbf{FinalRefer --- Reasoning (MMLU-Pro / HLE)}}]
\small \textbf{Role:} You are the top decision-maker and are good at solving problems and analyzing other people's opinions, thinking critically and giving final answers of the task.\\
\textbf{Constraint:} Choose the correct answer. Your reply must only contain one letter and cannot have any other characters.
\end{tcolorbox}

\end{document}